\definecolor{dark-red}{rgb}{0.4,0.15,0.15}
\definecolor{dark-blue}{rgb}{0.15,0.15,0.4}
\definecolor{medium-blue}{rgb}{0,0,0.5}
\setlist{nolistsep}
\theoremstyle{plain}
\newtheorem{theorem}{Theorem}[section]
\newtheorem{lemma}[theorem]{Lemma}
\newtheorem{corollary}[theorem]{Corollary}
\theoremstyle{definition}
\theoremstyle{remark}
\newtheorem{remark}{Remark}
\DeclareMathOperator{\subjectto}{subj\ to}
\DeclareMathOperator{\var}{var}
\DeclareMathOperator*{\argmin}{arg\,min}          
\renewcommand{\vec}[1]{\boldsymbol{#1}}
\newcommand{\normof}[1]{\|#1\|}
\newcommand{\bmat}[1]{\begin{bmatrix} #1 \end{bmatrix}}
\newcommand{\ZZ}{\mathbb{Z}}
\newcommand{\RR}{\mathbb{R}}
\providecommand{\vh}{\ensuremath{\vec{h}}}
\providecommand{\vs}{\ensuremath{\vec{s}}}
\providecommand{\vt}{\ensuremath{\vec{t}}}
\providecommand{\vx}{\ensuremath{\vec{x}}}
\providecommand{\vy}{\ensuremath{\vec{y}}}
\providecommand{\vX}{\ensuremath{\vec{X}}}
\providecommand{\vH}{\mathbf{H}}
\newcommand{\bmath}[1]{\ensuremath{\vec{#1}}}
\newcommand{\inner}[2]{\ensuremath{\left<#1,#2\right>}}                   
\newcommand{\iid}{\ensuremath{\stackrel{i.i.d}{\sim}}}
\newcommand{\sC}{\ensuremath{\mathcal{C}}}
\newcommand{\sB}{\ensuremath{\mathcal{B}}}                                 
\newcommand{\sF}{\ensuremath{\mathcal{F}}}
\newcommand{\sX}{\ensuremath{\mathcal{X}}}
\newcommand{\sQ}{\ensuremath{\mathcal{Q}}}
\newcommand{\sI}{\ensuremath{\mathcal{I}}}
\newcommand{\sH}{\ensuremath{\mathcal{H}}}
\newcommand{\EF}{\ensuremath{\mathcal{EF}}}
\newcommand{\NEF}{\ensuremath{\mathcal{NEF}}}
\providecommand{\Gn}{\ensuremath{\mathcal{G}_n}}
\providecommand{\Gnk}[1]{\ensuremath{\mathcal{G}_{#1}}}
\providecommand{\vlambda}{\ensuremath{\vec{\lambda}}}        
\providecommand{\vtheta}{\ensuremath{\vec{\theta}}}
\providecommand{\ulambda}{\ensuremath{\underline{\vlambda}}}
\providecommand{\uhfunc}{\ensuremath{\underline{\vh}}}
\providecommand{\vmu}{\ensuremath{\vec{\mu}}}                
\providecommand{\vkappa}{\ensuremath{\vec{\kappa}}}          
\DeclareMathOperator{\rint}{rint}
\DeclareMathOperator{\rbd}{rbd}
\DeclareMathOperator{\conv}{conv}
\DeclareMathOperator{\diff}{d}
\DeclareMathOperator*{\sumsum}{\sum\sum}      
\newcommand\KDE{{\rm KDE}\xspace}
\newcommand\KDES{{\rm KDEs}\xspace}
\newcommand\ERGM{{\rm ERGM}\xspace}
\newcommand\ERGMS{{\rm ERGMs}\xspace}
\newcommand\NERGM{{\rm MPERGM}\xspace}
\newcommand\NERGMS{{\rm MPERGMs}\xspace}
\newcommand\MAXENT{{\rm MaxEnt}\xspace}
\newcommand\MCMC{{\rm MCMC}\xspace}
\newcommand\NNZ{{\rm NNZ}\xspace}
\begin{document}

\title{Estimating Densities with Non-Parametric Exponential Families}


\author{
Lin Yuan$^\ast$, Sergey Kirshner$^\dagger$, Robert Givan$^\ast$\\
{\tt \{yuanl,skirshne,givan\}@purdue.edu}\\
\phantom{p}\\
${}^\ast$ School of Electrical and Computer Engineering\\
${}^\dagger$ Department of Statistics\\
Purdue University, West Lafayette, IN 47907, USA
}

\date{June 20, 2012}

\maketitle

\begin{abstract}
We propose a novel approach for density estimation with exponential families for the case when the true density may not fall within the chosen family.  Our approach augments the sufficient statistics with
features designed to accumulate probability mass in the neighborhood of the observed points, resulting in a non-parametric model similar to kernel density estimators.  We show that under mild conditions, the resulting model uses only the sufficient statistics if the density is within the chosen exponential family, and asymptotically, it approximates densities outside of the chosen exponential family.  Using the proposed approach, we modify the exponential random graph model, commonly used for modeling small-size graph distributions, to address the well-known issue of model degeneracy.

\end{abstract}

\section{Introduction}

The problem of density estimation is ubiquitous in machine learning and statistics.  A typical approach would assume a parametric family for the distribution from which the observed data is drawn and estimate the parameters by fitting them to the data.  Among the parametric families, {\em exponential families} play a prominent role, as maximum likelihood estimation from complete data for exponential families is asymptotically unbiased, consistent, and efficient \citep[][Chapter 5]{Vaart1998}. Finding the maximum likelihood estimator (MLE) reduces to a convex optimization problem that requires knowing only the sufficient statistics from the data \citep{Barndorff-Nielsen1978,Brown1986}.
When the functional form is not readily available, {\em non-parametric approaches} (e.g., kernel density estimation) provide a convenient alternative by allowing the number of parameters to grow with the available data.  However, if the number of components in the data vectors is large relative to the number of the available data points, non-parametric approaches may suffer from the curse of dimensionality \citep{Bellman57} and overfit.  From the standpoint of a bias-variance tradeoff, a parametric approach can be useful even if the number of observations is not large (as statistics of the data can often be estimated from relatively few samples) but could also be hindered by a misspecification bias if the true distribution falls outside the chosen parametric family.
Conversely, non-parametric approaches can approximate any density with enough data, suffering a high variance when the data is limited or the dimensionality is even moderately large.


We propose a novel non-parametric density-estimation approach for exponential families that combines some of the strengths of parametric and non-parametric approaches.  Our approach draws inspiration from kernel density estimators (\KDES), which approximate unknown densities by placing probability mass around the observations, and from the exponential families by imposing global constraints in matching the statistics.  Exponential families are derived by maximizing the Shannon's entropy
of the estimated distribution subject to the constraint that the expected values of the chosen statistics (features) with respect to the empirical and the estimated distributions must match.  Our proposed exponential family model imposes additional constraints requiring a small constant probability mass around each example point.  We accomplish this by augmenting the set of given statistics (features) with kernel
functions centered around the observations, so that the expected value for each of these functions represents the probability mass concentrated around an example point.
The resulting exponential family model is non-parametric, as each data point has a parameter associated with it.  The objective function for the parameter estimation is convex and contains an $\ell_1$-penalty term for each added parameter. These penalties encourage sparsity by potentially making many of the added parameters vanish. We show that if the true distribution is within the exponential family model with the chosen statistics, then as the sample size increases, all parameters associated with the added local features vanish and our approach converges to the true distribution.  If the true distribution is {\em not} from the chosen exponential family, then, our approach provides a close approximation to the unknown density, comparable to \KDES.

Our work is in part motivated by a problem of learning distributions
over graphs from examples of observed networks, typically from a {\em single} network.  Such data
arises in many domains, including social sciences, bioinformatics, and systems
sciences.  Among the approaches to this problem, one of the perhaps most-studied is the exponential random graph model \citep[\ERGM, or in the social network literature, $p\star$, e.g.,][]{HollandLeinhardt1981,FrankStrauss,WassermanPattison}.
\ERGMS use graph statistics as features to define an exponential
family distribution over all possible graphs with a given number of
nodes.  Such models have the desirable property of learning a
distribution that matches the observed graph statistics.
However, \ERGMS often suffer from issues of \emph{degeneracy}
\citep{Handcock2003b,Rinaldo2009,Lunga2011} manifested in placing most of the probability mass on unrealistic
graphs (e.g., an empty or a complete graph), very dissimilar to the observed graph(s).  As an illustration of our approach, we propose a modification to \ERGMS which alleviates the
above issue of degeneracy in moderate-sized graphs.

The main contributions of this paper are a novel framework for non-parametric estimation of densities with exponential family models that is applicable when the number of data points is relatively small, analysis of of its convergence properties, and a modification of \ERGMS that remedies one of the degeneracy issues.  The paper is structured as follows.  In Section \ref{sec:exp}, we briefly describe the exponential family models.  In Section \ref{sec:npexp} we introduce the features we use to constrain the probability mass around the data points and derive a formulation for a new model from first principles.   In Section \ref{sec:properties}, we derive some of the new model's properties, and then discuss the resulting parameter-estimation optimization problem and our approach to solving it in Section \ref{sec:learning}.  In Section \ref{sec:npergm}, we propose a new model for distributions over networks with a moderate number of nodes.  We explore the properties of our estimator for
1-dimensional densities and for modeling network data via an empirical study in Section \ref{sec:experiments}, and finally discuss our findings and outline possible future directions in Section \ref{sec:conclusion}.

\section{Exponential Family}\label{sec:exp}
We briefly introduce the exponential family of distributions before describing our contribution, a non-parametric exponential family.

Suppose $\vX$ is a vector of random variables with support $\sX
\subseteq \RR^{m}$.  A distribution for $\vX$ belongs to the exponential
family of distributions with sufficient statistics
$\vt:\sX\to\sH\subseteq\RR^d$, if its probability density has a functional form:\footnote{For notational convenience, we denote $\vX=\vx$ by $\vx$.}
\begin{equation}
\begin{split}
f^E\left(\vx\vert\vlambda\right) &= \frac{1}{Z\left(\vlambda\right)}q\left(\vx\right)\exp\left<\vlambda,\vt\left(\vx\right)\right>\mbox{ where }\\
Z\left(\vlambda\right) &= \int_{\sX}q\left(\vx\right)\exp\inner{\vlambda}{\vt\left(\vx\right)} \diff\vx<\infty
\label{eqn:exp}
\end{split}
\end{equation}
is a partition function, $\vlambda$ is a vector of canonical
parameters, $q:\sX\to\RR$ is a base measure, and $\inner{\cdot}{\cdot}$ denotes
the Euclidean inner product. We further assume that the exponential family is regular (i.e. the canonical parameter space is open).
Assuming $q$ is fixed, let $\EF_{\vt}$ denote the set
of all possible distributions of the form \eqref{eqn:exp} with the set
of sufficient statistics $\vt$.

Given samples $\vx^{1:n}\triangleq\left(\vx^1,\dots,\vx^n\right) \iid f$ where
$f:\sX\to\RR$ is an unknown density with the same support as $q$.  Let
$\hat{f}_n:\sX\to\RR$ be the empirical distribution for $\vx^{1:n}$,
$\hat{f}_n\left(\vx\vert \vx^{1:n}\right)=\frac{1}{n}\sum_{i=1}^n\delta\left(\vx^i\right)$
where $\delta\left(\vx\right)$ is a Dirac delta function.  Exponential
families can be obtained as a solution to the optimization problem of
minimizing the relative entropy subject to matching the moment constraints of the empirical and the estimated distributions:
\begin{align}
f^E_n\left(\vx\right) &= \argmin_{f^E\in\mathcal{F}} KL\left(f^E \parallel q\right)\ \subjectto\label{eqn:maxent}\\
E_{f^E_n\left(\vx\right)}\left[\vt\left(\vx\right)\right] &= E_{\hat{f}_n\left(\vx\vert\vx^{1:n}\right)}\left[\vt\left(\vx\right)\right].\label{eqn:moments}
\end{align}
A distribution
$f^E_n\left(\vx \vert \hat{\vlambda}_n\right)\in\EF_{\vt}$
satisfying \eqref{eqn:moments} can be found by maximizing the
log-likelihood
$l\left(\vlambda\right)=\left<\vlambda,\frac{1}{n}\sum\nolimits_{i=1}^n\vt\left(\vx^i\right)\right>-\log
Z\left(\vlambda\right)$, and provided
$\vt^\star=E_{\hat{f}_n\left(\vx\vert\vx^{1:n}\right)}\left[\vt\left(\vx\right)\right]\in\rint\left(\conv\left(\sH\right)\right)$,
a maximum likelihood estimate (MLE) $\hat{\vlambda}_n$ satisfying \eqref{eqn:moments} exists \citep{Wainwright2008}, and will be unique if $\EF_{\vt}$ is minimal \citep{Brown1986}.
If $f\in\EF_{\vt}$, then $\hat{\vlambda}_n\stackrel{p}{\to}\vlambda$ \citep{Vaart1998}.


However, if the true distribution does not fall within the chosen exponential family, $f\not\in\EF_{\vt}$,
the estimated model may provide a poor approximation to the true density.   As will be illustrated in Section \ref{sec:npergm}, for the case of discrete random vectors $\vX$ from the exponential family with a bounded support $\sH$, finding the MLE under the wrong modeling assumption $f\in\EF_{\vt}$ may assign very little probability mass to the observed samples $\vx^{1:n}$.

\section{Non-parametric Exponential Family}\label{sec:npexp}
In this section, we propose a new family of distributions, a modification to the exponential family $\EF_{\vt}$.  Our proposed approach modifies the set of features so that the estimated density (or a probability mass function for discrete vectors) places approximately the same amount of mass around each sample $\vx^i,\ i=1,\dots,n$ as the empirical distribution.  This approach allows using exponential family models to approximate distributions outside of the exponential family (e.g., mixtures, heavy-tailed distributions).  This approach can also be used to avoid degeneracy in cases where the set of features is poorly chosen (e.g., modeling of graphs with \ERGMS).

\begin{figure}
\centering
\includegraphics[width=0.6\linewidth]{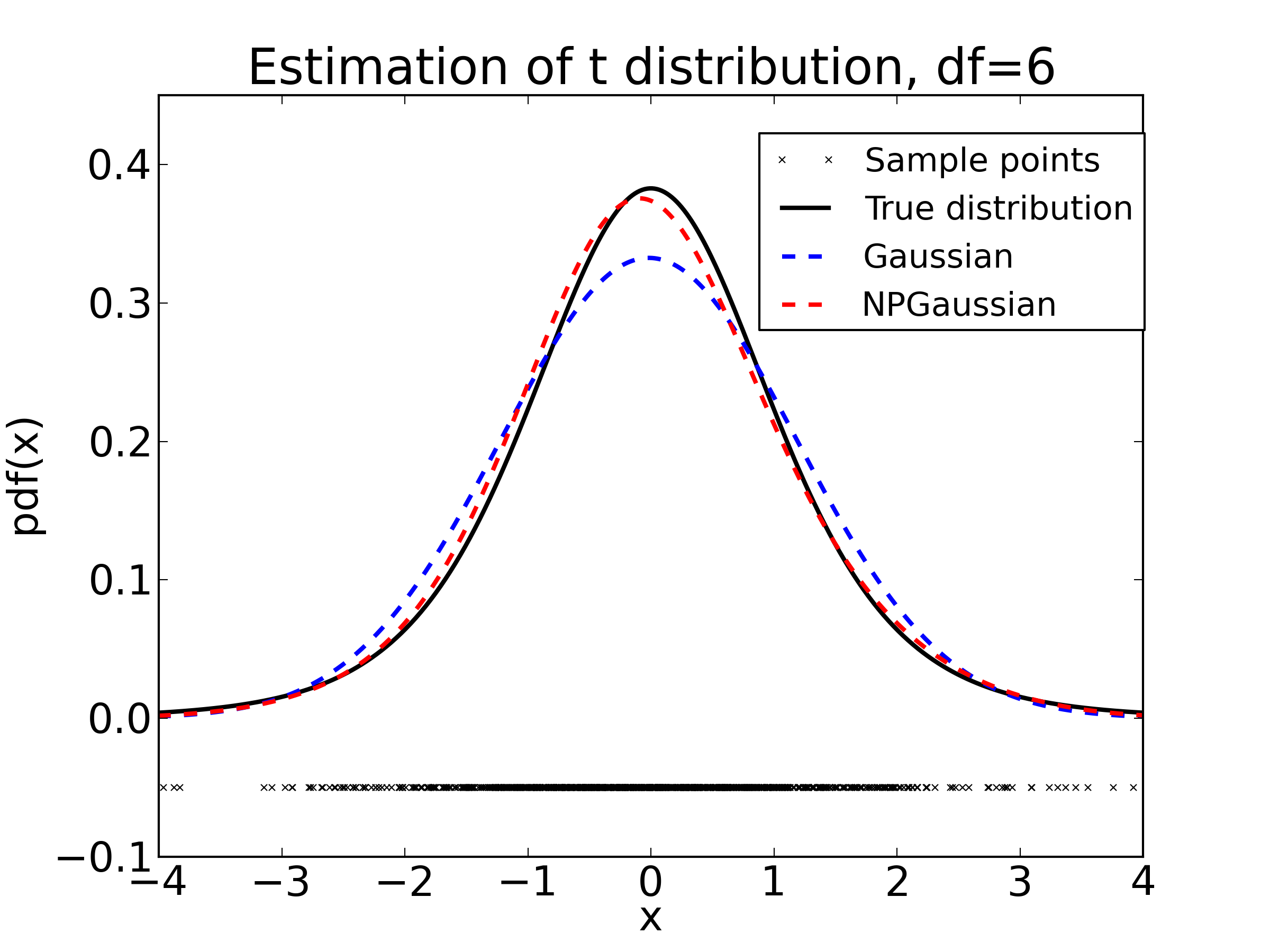}
\caption{Density estimation from samples from a $t$-distribution.
  Black {\tt x}'s are samples; the black solid line is the true density, the
  blue dashed line is the fitted Gaussian density, and the red dashed line is the fitted non-parametric Gaussian density with non-parametric exponential family model with Gaussian kernel with width $1.5$.}
\label{fig:fitt}
\end{figure}

\subsection{Motivation}
Suppose a set of samples from an unknown density ``looks'' Gaussian except
perhaps for a few outliers in the tails (Figure \ref{fig:fitt}).
Should we fit a Gaussian?  If not, should we use a non-parametric
approach?  Our approach combines both by using the exponential family
with given features (e.g. $\vt\left(\vx\right)=\left(x,x^2\right)$ in
the case of a univariate Gaussian) as a starting point and then adding
features for each data point. It draws inspiration from \KDES \citep[also known as Parzen windows,][]{Parzen1962},

\begin{align*}
f_n^{\KDE}\left(\vx\vert\vx^{1:n}\right) &= \frac{1}{n}\sum_{i=1}^nK_{\vH}\left(\vx;\vx^i\right)\mbox{ where}\nonumber\\
K_{\vH}\left(\vx;\vx^i\right) &= \left|\vH\right|^{-\frac{1}{2}}K\left(\vH^{-\frac{1}{2}}\left(\vx-\vx^i\right)\right).
\end{align*}

$K$ is a univariate kernel function, a bounded probability density
function on $\mathbb{R}$. $K_{\vH}$ is a multivariate kernel function
with a symmetric positive definite bandwidth matrix $\vH$; in this
paper, we assume $\vH = h^2\mathbf{I}_d$ (assuming $\vx\in\mathbb{R}^d$).

The uniform kernel is an indicator function on $\left[-\frac{1}{2},\frac{1}{2}\right]$:
\begin{equation*}
K_U\left(x\right) = 1\mbox{ if }\left|x\right|\leq\frac{1}{2},\mbox{ and }0\mbox{ otherwise,}
\end{equation*}
where a multi-dimensional version is a weighted indicator function for the hypercube 
centered at $\vx^i$ with each side equal to $\frac{h}{2}$.  Most other kernels used with \KDE are smooth approximations of $K_U$, e.g., Gaussian kernel $K_{\mathcal{N}}\left(x\right) = \frac{1}{\sqrt{2\pi}}e^{-x^2/2}$.
\KDE matches the mass around each data point (weighted according to the kernel) to that of the empirical distribution.  Since the empirical distribution approaches the true distribution as $n$ increases, the accuracy of \KDE approximation improves with the increase in the number of data points and the decrease of bandwidth parameter $h$.   The resulting representation however requires keeping all of the observations as parameters and requires exponentially many data points in the dimension $d$ to approximate the underlying density well.

\subsection{Formulation}
Our approach preserves the mass around each data point  by introducing additional moment constraints.  Let $\sB\subseteq \sX$ be a region in the support of $\vX$, and let $\sI_{\sB}\left(\vx\right)=1$ if $\vx\in\sB$, and $\sI_{\sB}\left(\vx\right)=0$ otherwise denote an indicator function for $\sB$.  Given metric space $\left(\sX,\sigma\right)$, let $\sB_i=\left\{\vx\in\sX:\sigma\left(\vx^i,\vx\right)\leq \varepsilon\right\}$ be an $\varepsilon$-neighborhood of $\vx_i$.  Then the probability mass for density $f$ in the $\varepsilon$-neighborhood $\sB_i$ of $\vx_i$ is $P\left(\sB_i\right)=E_{f}\left[\sI_{\sB_i}\right]$.  We propose adding constraints to \eqref{eqn:moments} which would approximately match the probability masses for $\sB_i$ ($i=1,\dots,n$) between the empirical and the estimated distributions ($\hat{f}_n$ and $f_n$ respectively):
\begin{equation}
\left|E_{\hat{f}_n\left(\vx\vert\vx^{1:n}\right)}\left[\sI_{\sB_i}\left(\vx\right)\right]-E_{f_n\left(\vx\vert\vx^{1:n}\right)}\left[\sI_{\sB_i}\left(\vx\right)\right]\right|\leq \beta_i,\label{eqn:box}
\end{equation}
where $\beta_i\geq0$ determine how closely the masses should match.  Similar to \KDES, $\sI_{\sB_i}$ in \eqref{eqn:box} may be replaced with a multidimensional kernel $K_\vH\left(\vx^i;\vx\right)$, which assigns decaying importance of mass away from the center (e.g., a smoothed version of $\sI_{\sB_i}$). We will use $t_a^i \triangleq K_{\vH}\left(\vx^i ; \vx\right)$, and use $\vt_a\left(\vx\right)\triangleq\left[t_a^1\left(\vx\right),\dots,t_a^n\left(\vx\right)\right]$ to augment the statistics $\vt$ in estimating densities.\footnote{We omit $h$ from $t_a^i$ for the simplicity of notation.  It is a tuning parameter that may be set globally for all $i=1,\dots,n$.} In addition to the canonical parameters $\vlambda$ for sufficient statistics, we add {\em augmented} parameters $\vlambda_a$ for the augmented statistics $\vt_a\left(\vx\right)$.

Our proposed density approximation ($f^{NE}_n\left(\vx\right)$) is a solution to
\begin{equation}\label{eqn:nexp}
\begin{split}
f_n^{NE}\left(\vx\vert\vx^{1:n}\right) &= \argmin_{f^{NE}\in\sF}KL\left(f^{NE}\parallel q\right)\ \subjectto\\
E_{f_n^{NE}\left(\vx\vert\vx^{1:n}\right)}\left[\vt\left(\vx\right)\right] &=
E_{\hat{f}_n\left(\vx\vert\vx^{1:n}\right)}\left[\vt\left(\vx\right)\right],\\
\left|E_{f_n^{NE}}\left[t_a^i\left(\vx\right)\right]\right.&-\left.E_{\hat{f}_n}\left[t_a^i\left(\vx\right)\right]\right|
\leq \beta_i,\ i=1,\dots,n.
\end{split}
\end{equation}
$f_n^{NE}$ falls within the generalized \MAXENT framework \citep{Dudik2007}:
\begin{equation}\label{eqn:nexp form}
\begin{split}
f\left(\vx\right) &= \frac{1}{Z\left(\vlambda,\vlambda_a\right)}q\left(\vx\right)\exp\left[\inner{\vlambda}{\vt\left(\vx\right)} + \inner{\vlambda_a}{\vt_a\left(\vx\right)}\right]\\
Z\left(\vlambda,\vlambda_a\right) &= \int_{\sX}q\left(\vx\right)\exp\left[\inner{\vlambda}{\vt\left(\vx\right)} + \inner{\vlambda_a}{\vt_a\left(\vx\right)}\right]\diff \vx.
\end{split}
\end{equation}
Let $\vs\left(\vx\right) \triangleq \left(\vt\left(\vx\right),\vt_a\left(\vx\right)\right)$ and $\vtheta \triangleq \left(\vlambda,\vlambda_a\right)$ be a combined set of statistics and parameters, respectively, for the augmented model.  A specific set of parameter values for the distribution in \eqref{eqn:nexp form} satisfying the constraints in \eqref{eqn:nexp} can be found by maximizing the penalized log-likelihood
\begin{equation}
l\left(\vtheta\right) = \inner{\vtheta}{\frac{1}{n}\sum_{i=1}^n\vs\left(\vx^i\right)} -\ln Z\left(\vtheta\right) - \sum_{i=1}^n\beta_i\left|\lambda_a^i\right|.\label{eqn:genll}
\end{equation}
Note that the distribution $f_n^{NE}$ \footnote{$f_n^{NE}\left(\vx\vert\vtheta,\vx^{1:n}\right)$'s functional form depends on both $\vx^{1:n}$ and $\vtheta$, for convenience, we sometimes omit $\vtheta$ and $\vx^{1:n}$ in notation when referring to $f_n^{NE}$.} satisfying the constraint in (\ref{eqn:nexp}) will always exist since $\hat{f}_n$ satisfies all of the above constraints.  

We refer to the above class of models as {\em non-parametric exponential family models}, since the number of non-zero parameters $\vtheta=\left(\vlambda,\vlambda_a\right)$ may increase with the number of available data points.  We will denote this family by $\NEF_{\vs}$.  Clearly $\EF_{\vt}\subseteq\NEF_{\vs}$ as all the augmented parameters can be set to $0$.
Let $\hat{\vtheta}_n=\left(\tilde{\vlambda}_n,\tilde{\vlambda}_{a,n}\right)$ be the MLE of $\left(\vlambda,\vlambda_a\right)$ for the case of $n$ samples.
The $\ell_1$-penalty in \eqref{eqn:genll} is known to be sparsity-inducing \citep[e.g.,][]{Bach2011}, so in practice, many of $\tilde{\lambda}_{a,n}^i=0$.

Note that the framework in \eqref{eqn:nexp} allows matching the moments of the estimated distribution to some predetermined vector $\vt^\star\in\rint\left(\conv\left(\sH\right)\right)$ instead of the empirical moments $E_{\hat{f}_n\left(\vx\vert\vx^{1:n}\right)}\left[\vt\left(\vx\right)\right]$, leading to the same functional form for the density \eqref{eqn:nexp form}, but with an additional linear term $\inner{\vlambda}{\vt^\star - \frac{1}{n}\sum_{i=1}^n\vt\left(\vx^i\right)}$ in \eqref{eqn:genll} \citep{Dudik2007}.  Thus, our non-parametric density estimator is capable of satisfying global constraints (matching a set of provided moments, e.g., learning a distribution with a given covariance).  We are not aware of other non-parametric methods with this capability.

\subsection{Theoretical Properties}\label{sec:properties}
%

The proofs appear in the Appendix \ref{sec:proofs}.

\begin{theorem}\label{thm:true}
Suppose a vector of random variables $\vX$ with support on $\sX$ has a density $f\in\EF_{\vt}$ with features $\vt:\sX \to \sH\subseteq\mathbb{R}^d$ and a vector of canonical parameters $\vlambda\in\sC\in\mathbb{R}^d$.
Suppose $\vx^1,\dots,\vx^n \triangleq \vx^{1:n}$ is a sequence
of i.i.d. random vectors drawn from $f$.  Let
$f_n^{NE}\left(\vx\vert \hat{\vtheta}_n,\vx^{1:n}\right)\in\mathcal{NEF}_{\vs}$
be the MLE solution of \eqref{eqn:nexp}, $\hat{\vtheta}_n =
\left(\tilde{\vlambda}_n, \tilde{\vlambda}_{a,n}\right)$, with all $\beta_i=\beta>0$, $i=1,\dots,n$.  Assuming
\begin{enumerate}
\setlength{\itemsep}{0em}
\setlength{\parskip}{0pt}
\setlength{\parsep}{0pt}
\item $\sX$ is compact,
\item 
$\vt$ is continuous,
\item $\EF_{\vt}$ is a family of uniformly equicontinuous functions w.r.t $\vx$,
\item Kernel $K$ has bounded variation and has a
  bandwidth parameter $\vH$ such that the series $\sum_{n=1}^{\infty}e^{-\gamma n \left|\vH\right|}$
converges for every positive value of $\gamma$,
\end{enumerate}
then as $n\to\infty$, $\tilde{\lambda}_{a,n}^i\stackrel{p}{\to}0,
\forall i=1,\dots,n$ and $\tilde{\vlambda}_n\stackrel{p}{\to}\vlambda$.

\end{theorem}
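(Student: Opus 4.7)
The plan is to combine a KKT-type analysis of the $\ell_1$-penalized MLE with uniform kernel-density-estimation consistency. First I would write the first-order optimality conditions for $\hat{\vtheta}_n=(\tilde{\vlambda}_n,\tilde{\vlambda}_{a,n})$ maximizing \eqref{eqn:genll}: (i) the standard moment match $E_{f_n^{NE}(\cdot\vert\hat{\vtheta}_n)}[\vt(\vx)]=\tfrac{1}{n}\sum_i\vt(\vx^i)$, and (ii) for every $i$, the subgradient condition $E_{f_n^{NE}(\cdot\vert\hat{\vtheta}_n)}[t_a^i(\vx)]-\tfrac{1}{n}\sum_j t_a^i(\vx^j)\in -\beta\,\partial|\tilde{\lambda}_{a,n}^i|$, where the right-hand side is $\{-\beta\,\mathrm{sgn}(\tilde{\lambda}_{a,n}^i)\}$ when the argument is nonzero and the interval $[-\beta,\beta]$ when it is zero. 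Since \eqref{eqn:genll} is concave in $\vtheta$, these conditions characterize the optima; moreover, whenever the inclusion in (ii) holds with strict interior slack for every $i$, any maximizer must have $\tilde{\lambda}_{a,n}^i=0$.

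Next I would propose the ``oracle'' candidate $(\hat{\vlambda}_n,\mathbf{0})$, where $\hat{\vlambda}_n$ is the ordinary MLE inside $\EF_{\vt}$. Under $f\in\EF_{\vt}$, the standard theory \citep{Vaart1998} gives $\hat{\vlambda}_n\stackrel{p}{\to}\vlambda$ and $E_{f(\cdot\vert\hat{\vlambda}_n)}[\vt(\vx)]=\tfrac{1}{n}\sum_i\vt(\vx^i)$, so condition (i) is automatic for this candidate. Condition (ii) then reduces to showing
\begin{equation*}
\Delta_n\;\triangleq\;\max_{1\le i\le n}\Bigl|E_{f(\cdot\vert\hat{\vlambda}_n)}\bigl[K_\vH(\vx^i;\vx)\bigr]-\tfrac{1}{n}\sum_{j=1}^{n}K_\vH(\vx^i;\vx^j)\Bigr|\stackrel{p}{\to}0,
\end{equation*}
because $\beta>0$ is a fixed constant, so any $o_p(1)$ bound will eventually sit strictly below it.

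The crux is this uniform control, which I would split in two. For the empirical piece, $\tfrac{1}{n}\sum_j K_\vH(\vx^i;\vx^j)$ is the KDE estimator of $f$ at the random location $\vx^i$; under compactness of $\sX$ (assumption 1), continuity of $f$ inherited from assumption 2 and $f\in\EF_{\vt}$, bounded variation of $K$, and the series condition $\sum_n e^{-\gamma n\vert\vH\vert}<\infty$ (assumption 4), Nadaraya's theorem delivers almost-sure uniform consistency $\sup_{\vy\in\sX}\vert\tfrac{1}{n}\sum_j K_\vH(\vy;\vx^j)-f(\vy)\vert\to 0$, which in particular dominates the maximum over the sample points. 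For the model piece, $E_{f(\cdot\vert\hat{\vlambda}_n)}[K_\vH(\vy;\vx)]$ is the convolution of $f(\cdot\vert\hat{\vlambda}_n)$ with $K_\vH$; using $\hat{\vlambda}_n\stackrel{p}{\to}\vlambda$ together with the uniform equicontinuity of $\EF_{\vt}$ (assumption 3), $f(\cdot\vert\hat{\vlambda}_n)\to f$ uniformly in probability, and a standard shrinking-bandwidth convolution argument then yields $\sup_{\vy}\vert E_{f(\cdot\vert\hat{\vlambda}_n)}[K_\vH(\vy;\vx)]-f(\vy)\vert\stackrel{p}{\to}0$. A triangle inequality combines the two halves into $\Delta_n=o_p(1)$.

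The main obstacle is that the number of added kernel features grows linearly with $n$ while $\vH$ shrinks, so pointwise rates are insufficient and one truly needs uniform-in-$\vy$ convergence at the random sample locations simultaneously on both the empirical and model sides; this is exactly what assumptions 1, 3, and 4 are calibrated to deliver. Once $\Delta_n\stackrel{p}{\to}0$ is established, with probability tending to one the slack in condition (ii) is strict for every $i$, forcing $\tilde{\lambda}_{a,n}^i=0$ at any maximizer; on this high-probability event the penalized problem collapses to the ordinary MLE inside $\EF_{\vt}$, so $\tilde{\vlambda}_n=\hat{\vlambda}_n$, and the two claimed convergences $\tilde{\lambda}_{a,n}^i\stackrel{p}{\to}0$ and $\tilde{\vlambda}_n\stackrel{p}{\to}\vlambda$ follow.
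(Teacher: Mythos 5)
Your overall architecture matches the paper's: propose the oracle candidate $\left(\hat{\vlambda}_n,\vec{0}\right)$, show that the augmented constraints are slack at it with probability tending to one (via Nadaraya's uniform \KDE consistency on one side and uniform convergence of $f^E_n\left(\cdot\vert\hat{\vlambda}_n\right)$ to $f$, from equicontinuity plus compactness, on the other), and conclude that the penalized problem collapses to the ordinary MLE. Your KKT/subgradient phrasing of the last step is if anything cleaner than the paper's entropy-comparison argument.

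However, there is a genuine flaw in how you control $\Delta_n$: you pivot both halves through $f\left(\vy\right)$ itself, claiming $\sup_{\vy}\left|\tfrac{1}{n}\sum_j K_{\vH}\left(\vy;\vx^j\right)-f\left(\vy\right)\right|\to 0$ and $\sup_{\vy}\left|E_{f\left(\cdot\vert\hat{\vlambda}_n\right)}\left[K_{\vH}\left(\vy;\vx\right)\right]-f\left(\vy\right)\right|\stackrel{p}{\to}0$ via a ``shrinking-bandwidth convolution argument.'' Theorem \ref{thm:true} does not assume $\left|\vH\right|\to 0$; assumption 4 only requires convergence of $\sum_n e^{-\gamma n\left|\vH\right|}$, which is satisfied by a \emph{fixed} bandwidth. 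With a fixed bandwidth the empirical average converges to the smoothed density $\left(K_{\vH}\ast f\right)\left(\vy\right)=E_f\left[K_{\vH}\left(\vy;\vx\right)\right]$, not to $f\left(\vy\right)$, and the model-side convolution likewise retains an $O(1)$ bias relative to $f$; both of your displayed suprema then fail to vanish. The first half of Nadaraya's theorem, which is all that assumption 4 buys you, gives only $\sup_{\vy}\left|f_n^{\KDE}\left(\vy\right)-\overline{f}_n^{\KDE}\left(\vy\right)\right|\stackrel{a.s.}{\to}0$, i.e.\ convergence to the \emph{expected} \KDE. The repair is exactly the paper's choice of pivot: compare both sides of $\Delta_n$ to $E_f\left[t_a^i\left(\vx\right)\right]=\left(K_{\vH}\ast f\right)\left(\vx^i\right)$ rather than to $f\left(\vx^i\right)$. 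The empirical term then converges to it by Nadaraya's first half directly, and the model term satisfies $\left|E_{f_n^E}\left[t_a^i\right]-E_f\left[t_a^i\right]\right|\le\sup_{\vx\in\sX}\left|f_n^E\left(\vx\right)-f\left(\vx\right)\right|\int_{\sX}t_a^i\left(\vx\right)\diff\vx\stackrel{p}{\to}0$ since the kernel has unit mass. The two bias terms you introduced never arise, and no shrinking bandwidth is needed.
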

%
%
Intuitively, uniform convergence is required because we need the $n$ additional constraints be satisfied with fixed threshold $\beta$ as long as $n>N$. We use Assumption 2 to relate the pointwise convergence of the MLE $\vlambda$ in regular exponential families to pointwise convergence of $f_n^E\left(\vx \vert \hat{\vlambda}_n\right)$. Assumptions 1 and 3 are further employed to convert the pointwise convergence to uniform convergence of $f_n^E$ by considering $\sX,\sC$ to be subsets of the original regular exponential family.\footnote{For example, Theorem \ref{thm:true} applies to the exponential family $\mathcal{N}(\mu,\sigma^2)$ with $\sigma \in [a,b], \forall b>a>0$, but not if $\sigma^2 \in (0,\infty)$.} Assumption 4 is the requirement used by \cite{Nadaraya1965} for the uniform convergence of \KDE satisfied by common kernels. Upon these uniform convergence results, the augmented constraints will be satisfied by the original MLE estimate $\left[\hat{\vlambda}_n,\vec{0}\right]$. Further, the nature of $\hat{\vlambda}_n$ being a maximum entropy solution guarantees that $\left[\hat{\vlambda}_n,\vec{0}\right]$ is a maximum entropy solution under the additional constraints.

Theorem \ref{thm:true} shows that if the true distribution falls
within the exponential family, then as sample size increases,
the estimated density from the non-parametric exponential family will
have vanishing reliance on the augmented parameters.

\begin{theorem}\label{thm:nef}
Given a probability density function
$f\left(\vx\right):\sX\to\RR$, let $f_n^{NE}\left(\vx\vert
\hat{\theta}_n,\vx^{1:n}\right)\in\mathcal{NEF}_{\vs}$ be a solution satisfying
\eqref{eqn:nexp}. If
\begin{enumerate}
\setlength{\itemsep}{0em}
\setlength{\parskip}{0pt}
\setlength{\parsep}{0pt}
\item $f$ is uniformly continuous on $\sX$,
\item $K_{\vH}\left(\vx\right)$ is uniformly continuous on $\sX$,
\item $\sup_{\vx \in \sX} K_{\vH}\left(\vx\right) < \infty$,
\item $\lim\limits_{\normof{\vx}\to\infty} K_{\vH}\left(\vx\right)\prod\limits_{i=1}^{m}\vx_i = 0$,
\item $\lim\limits_{n\to\infty}\left|\vH\right|^{\frac{1}{2}} = 0$,
\item $\lim\limits_{n\to\infty}n\left|\vH\right|^{\frac{1}{2}} = \infty$,
\end{enumerate}
then $f_n^{NE}\left(\vx \vert \hat{\vtheta}_n,\vx^{1:n}\right) \stackrel{p}{\to} f\left(\vx\right)$ pointwise on $\sX$.
\end{theorem}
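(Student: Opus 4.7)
The plan is to exploit the classical pointwise consistency of kernel density estimators \citep{Parzen1962} and transfer it to $f_n^{NE}$ through the augmented-moment constraints. Conditions (1)--(6) are, up to minor regularity, the standard Parzen/Cacoullos hypotheses, and therefore guarantee that the reference estimator
\begin{equation*}
f_n^{\mathrm{KDE}}(\vx) \triangleq \frac{1}{n}\sum_{i=1}^n K_{\vH}(\vx; \vx^i)
\end{equation*}
satisfies $f_n^{\mathrm{KDE}}(\vx) \stackrel{p}{\to} f(\vx)$ pointwise on $\sX$. I would invoke this theorem as a black box and spend the rest of the argument transferring the consistency from $f_n^{\mathrm{KDE}}$ to $f_n^{NE}$.

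The transfer begins by rewriting the third constraint of \eqref{eqn:nexp}. Since $t_a^i(\vx) = K_{\vH}(\vx^i;\vx)$ and $K_{\vH}$ is symmetric, $E_{\hat{f}_n}[t_a^i] = \frac{1}{n}\sum_{j=1}^n K_{\vH}(\vx^i;\vx^j) = f_n^{\mathrm{KDE}}(\vx^i)$ and $E_{f_n^{NE}}[t_a^i] = (f_n^{NE} * K_{\vH})(\vx^i)$, so the constraint becomes
\begin{equation*}
\bigl|(f_n^{NE} * K_{\vH})(\vx^i) - f_n^{\mathrm{KDE}}(\vx^i)\bigr| \leq \beta_i, \quad i = 1, \dots, n.
\end{equation*}
In words: at every sample point, the $K_{\vH}$-smoothed $f_n^{NE}$ and the ordinary KDE agree up to $\beta_i$.

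To pass from pointwise agreement on the (random) sample set $\vx^{1:n}$ to pointwise convergence at an arbitrary fixed $\vx \in \sX$, I would combine two ingredients: first, as $n \to \infty$ the i.i.d.\ samples $\vx^i$ become dense in the support of $f$ with probability one; second, the three maps $\vx \mapsto (f_n^{NE} * K_{\vH})(\vx)$, $\vx \mapsto f_n^{\mathrm{KDE}}(\vx)$, and $\vx \mapsto f(\vx)$ are (uniformly) continuous on $\sX$---the first by assumption (2) and the exponential-family form of $f_n^{NE}$, the second by (2), and $f$ by (1). A standard three-$\varepsilon$ argument, together with the Parzen result, then yields $(f_n^{NE} * K_{\vH})(\vx) \stackrel{p}{\to} f(\vx)$ for every $\vx \in \sX$. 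Finally, assumptions (4) and (5) make $K_{\vH}$ an approximate identity as $n \to \infty$, so that $f_n^{NE}(\vx) - (f_n^{NE} * K_{\vH})(\vx) \to 0$ pointwise at every continuity point of $f_n^{NE}$, delivering the desired conclusion.

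The main obstacle I foresee is making the transfer rigorous by controlling the modulus of continuity of the random sequence $\{f_n^{NE}\}$ uniformly in $n$. Because $f_n^{NE}$ is built from $n$ local kernel features $t_a^i$ with coefficients $\lambda_a^i$, a uniform equicontinuity bound must come from the $\ell_1$-penalty in \eqref{eqn:genll}, which at an optimum restrains the growth of $\sum_i |\lambda_a^i|$; the boundedness and uniform continuity of $K_{\vH}$ (hypotheses (2) and (3)) are then needed to translate that $\ell_1$ bound into pointwise control on $f_n^{NE}$ itself. Turning this heuristic into a quantitative equicontinuity estimate strong enough to paste together the three-$\varepsilon$ argument with the approximate-identity step, rather than invoking it informally as above, is the technical heart of the proof.
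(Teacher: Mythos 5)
Your proposal follows essentially the same route as the paper's proof: both rewrite the augmented constraints as $\left|\left(f_n^{NE}*K_{\vH}\right)\left(\vx^i\right)-f_n^{KDE}\left(\vx^i\right)\right|\le\beta_i$, use Parzen's consistency and bias (approximate-identity) results to link the KDE and the smoothed $f_n^{NE}$ to $f$ at the sample points, and then extend to arbitrary $\vx\in\sX$ via denseness of the samples combined with (uniform) continuity and a three-term triangle inequality. The uniform-in-$n$ equicontinuity of $f_n^{NE}$ that you flag as the unresolved technical heart is exactly the step the paper's own proof also leaves informal---it bounds $\left|f_n^{NE}\left(\vx\right)-f_n^{NE}\left(\vx^{i'}\right)\right|$ by citing its denseness lemma, which as stated only covers $f$---so your account matches the published argument in both structure and level of rigor.
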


Assumptions 3-6 are required for pointwise convergence of \KDE \citep{Parzen1962} at specific points $\vx^{1:n}$. We then extend this pointwise convergence from $\vx^{1:n}$ to $\sX$, considering the probability of sampling a new $\vx\in\sX$ far away from existing $\vx^{1:n}$ under the true density $f$.  The monotone convergence theorem and the uniform continuity assumptions (1 and 2) lead to the pointwise convergence $f_n^{NE}\stackrel{p}{\to}f$ on $\sX$.

Theorem \ref{thm:nef} indicates the weak consistency of the non-parametric exponential family density estimator.
Thus our proposed non-parametric approach can be used to approximate densities which are not from exponential families.

\subsection{Estimating Parameters for Non-Parametric Exponential Families}\label{sec:learning}

Recently there have been a number of methods developed for optimization of convex non-smooth functions, some of them specifically aimed at log-linear problems such as \eqref{eqn:genll} \citep[e.g.,][]{Bach2011,WuLange2008,Shalev-ShwartzTewari2011}.  We employed a coordinate descent algorithm similar to the SUMMET algorithm of \cite{Dudik2007} (see Algorithm \ref{alg:coord}), primarily, due to its simplicity.  Other possible approaches can be employed as well and may end up more efficient for this formulation.

The proposed algorithm iterates between optimizing canonical parameters $\vlambda$ (by setting $E_{\hat{f}_n}\left[\vt\left(\vx\right)\right] = E_{f^{NE}_n\left(\vx\vert\vtheta^{(k)}\right)}\left[\vt\left(\vx\right)\right]$) and sequentially optimizing the augmented parameters $\vlambda_a$ so that the Karush-Kuhn-Tucker conditions \citep[e.g.,][]{NocedalWright06} are satisfied:
\begin{equation*}
E_{\hat{f}_n}\left[t_a^i\left(\vx\right)\right] - E_{f^{NE}_n\left(\vx\vert\vtheta\right)}\left[t_a^i\left(\vx\right)\right] \in \begin{cases}\left\{\beta_i\right\}&\lambda_a^i>0,\\\left\{-\beta_i\right\}&\lambda_a^i<0,\\
\left(-\beta_i,\beta_i\right)&\beta_a^i=0.\end{cases}
\end{equation*}

\begin{algorithm}[t]
\caption{Non-Parametric Exponential Family Coordinate Descent}\label{alg:coord}
\begin{algorithmic}
\STATE \textbf{INPUT:} Samples $\vx^1,\dots,\vx^n\in\mathbb{R}^d$, sufficient statistics $\vt:\sX\to\sH$, augmented features $t_{a}^i:\sH\to\mathbb{R}$, $i=1,\dots,n$,  $\ell_1$ regularization parameters $\vec{\beta}
$\STATE \textbf{OUTPUT:} MLE $\bmath{\theta}=\left(\lambda^1,\dots,\lambda^d,\lambda_a^1,\dots,\lambda_a^n\right)$
\STATE Initialize $\bmath{\theta}^{\left(0\right)}$
\STATE Compute the sufficient statistics $E_{\hat{f}_n\left(\vx\vert\vx^{1:n}\right)}\left[\vt\left(\bmath{x}\right)\right]$
\REPEAT
\STATE iteration $k=k+1$, $\bmath{\theta}^{(k)}=\bmath{\theta}^{(k-1)}$
\FOR{$i=1,\dots,d$}
\STATE $g_{i}^{(k)} = E_{\hat{f}_n}\left[t^i\left(\vx\right)\right]-E_{f_n^{NE}\left(\vx\vert\theta^{\left(k\right)}\right)}\left[t^i\left(\vx\right)\right]$
\STATE Perform line search along $g_{i}^{(k)}$ to update $\lambda^{i,(k)}$
\ENDFOR
\FOR{$j=1 \dots n$}
\STATE Solve two equations for $\lambda_a^j$ ($\lambda_a^{j,-}$ and $\lambda_a^{j,+}$, respectively):
\STATE $E_{f^{NE}_n\left(\vx\vert\vtheta^{(k)}\right)}\left[t_{a}^{j}\left(\vx\right)\right] = E_{\hat{f}_n}\left[t_a^j\left(\vx\right)\right]-\beta_j$
\STATE $E_{f^{NE}_n\left(\vx\vert\vtheta^{(k)}\right)}\left[t_{a}^{j}\left(\vx\right)\right] = E_{\hat{f}_n}\left[t_a^j\left(\vx\right)\right]+\beta_j$
\STATE
\STATE choose $\lambda_{a}^{j,(k)}=\lambda_{a}^{j,-}$ if $\lambda_{a}^{j,-} > 0$
\STATE choose $\lambda_{a}^{j,(k)}=\lambda_{a}^{j,+}$ if $\lambda_{a}^{j,+} < 0$
\STATE choose $\lambda_{a}^{j,(k)}=0$ otherwise
\ENDFOR
\UNTIL {convergence}
\STATE return $\bmath{\theta}^{(k)}$
\end{algorithmic}
\end{algorithm}

Algorithm \ref{alg:coord} belies the inherent difficulty of: (1) calculation of the partial derivative $g_{i}^{(k)}$, and (2) an implicit search procedure to update $\vlambda_a^{j,(k)}$, both involve calculating intractable integrals. If the support is low-dimensional and the mass is contained in a small volume, then the partition function (and thus the gradient) can be computed by numerical integration (quadrature).  Alternatively, a common approach to MLE with an intractable partition function $Z\left(\vtheta\right)$ is Markov Chain Monte Carlo MLE \citep[MCMC-MLE,][]{Geyer1992}. For example, the time complexity at each iteration $k$ is $O(Sn^2)$, where $S$ is the number of Monte-Carlo samples we choose to use. However, we believe developments in optimization \citep[e.g.]{Bach2011,Shalev-ShwartzTewari2011} will help us find an efficient solution.

\section{Application to Modeling of Graphs}\label{sec:npergm}
In this section, we turn our attention to a problem of learning a distribution over $\mathcal{X}=\mathcal{G}_n$, a set of undirected graphs with $n$ vertices and no self-loops, from a single observed instance $G^\star\in\mathcal{G}_n$, an important branch in the analysis of social networks because of complicated relational structure \citep[e.g.][]{Goodreau2007}.
%
A commonly used approach to this problem which arises in the analysis of social networks is to estimate a distribution using exponential random graph models \citep[\ERGMS, e.g.,][]{WassermanPattison,Handcock2003b,Robins2007a,Robins2007b,WassermanRobins2004}.  This approach however suffers from the model degeneracy, with estimated models placing probability mass on unrealistic graphs (e.g., complete or empty) and away from the observed instance.  We propose a modification to \ERGMS utilizing the non-parametric exponential family approach from Section \ref{sec:npexp} which alleviates the above issue of degeneracy.
\subsection{Exponential Random Graph Models}
An \ERGM (or $p^\star$ model) is an exponential family model over $\mathcal{G}_n$ which uses graph statistics as its features\footnote{sufficient statistics for the exponential family}.  These features are typically motivated by the properties of the networks that are of interest to domain scientists (e.g., sociologists), and may include (among other local and global features) the number of edges ($t_e\left(G\right)= \sumsum_{1\leq i<j\leq n}e_{ij}$) and triangles ($t_\triangle\left(G\right)=\operatornamewithlimits{\sum\sum\sum}_{1\leq i<j<k\leq n}e_{ij}e_{ik}e_{jk}$),
where $e_{ij}=1$ if there is edge between nodes $i$ and $j$, and $0$ otherwise.  The probability mass for a graph $G\in\mathcal{G}_n$ is defined as
\begin{equation}
\label{eqn:ergm}
\begin{split}
P\left(G\vert\hat{\vlambda}\right) &= \frac{1}{Z\left(\vlambda\right)}\exp\inner{\vlambda}{\vt\left(G\right)},\\
Z\left(\vlambda\right) &= \sum_{G\in\mathcal{G}_n}\exp\inner{\vlambda}{\vt\left(G\right)}.
\end{split}
\end{equation}
The MLE $\hat{\vlambda}$ makes mean statistics of the distribution match that of the observed graph: $E_{P\left(G\vert\vlambda\right)}\left[\vt\left(G\right)\right] =  \vt\left(G^\star\right)$.

\subsection{Degeneracy}
Let $\mathcal{H}=\left\{\vt\left(G\right): G\in\mathcal{G}_n\right\}$
be the set of all possible values for features.  Even though in theory
if the feature vector for the observed graph is in the relative
interior of the convex hull
$\vt\left(G^\star\right)\in\mbox{rint}\left(\mbox{conv}\left(\mathcal{H}\right)\right)$,
MLE $\hat{\vlambda}$ exists (and is unique if the set of features is
linearly independent or minimal), in practice \ERGMS often suffer
from {\em degeneracy} \citep{Handcock2003b} manifested in one of the
following ways: (1) MLE procedure does not converge due to numerical
instabilities, and (2) MLE is found, but the resulting probability
mass is placed mostly on unrealistic graphs (i.e., empty or complete
graphs) and little mass is placed in the vicinity of the observed
graph (around $\vt\left(G^\star\right)$ in $\mathcal{H}$, c.f. Figure \ref{fig:g8degen}).

\begin{figure*}[!tb]
\centering
\begin{minipage}[c]{.12\linewidth}
\centering
\subfigure[The graph with 22 edges and 29 triangles]{\includegraphics[width=1\linewidth]{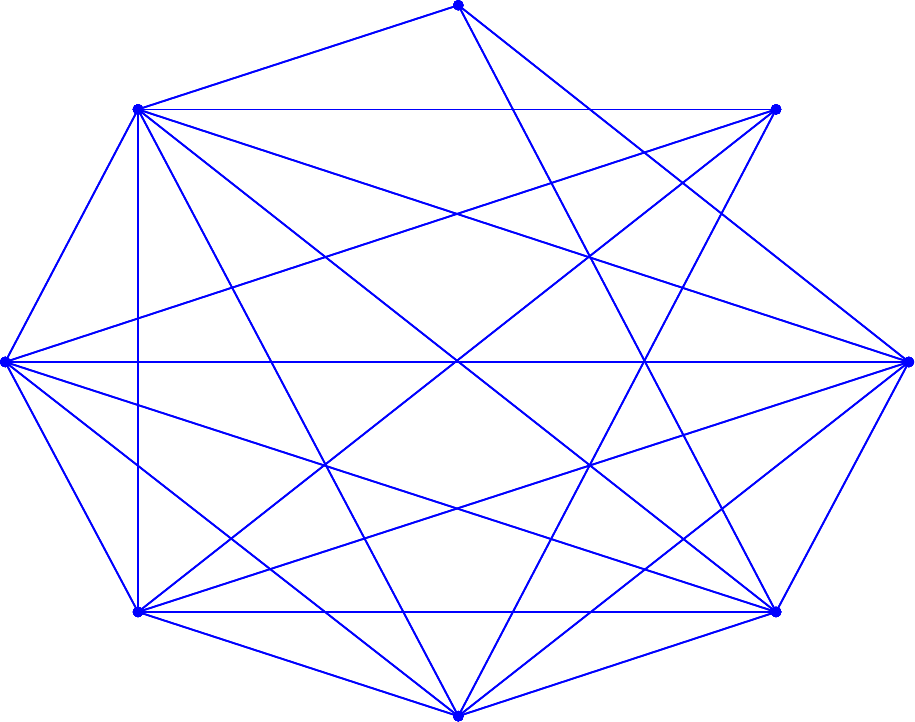}}
\end{minipage}
\begin{minipage}{.36\linewidth}
\subfigure[Probability mass for \ERGM]{\includegraphics[width=1\linewidth]{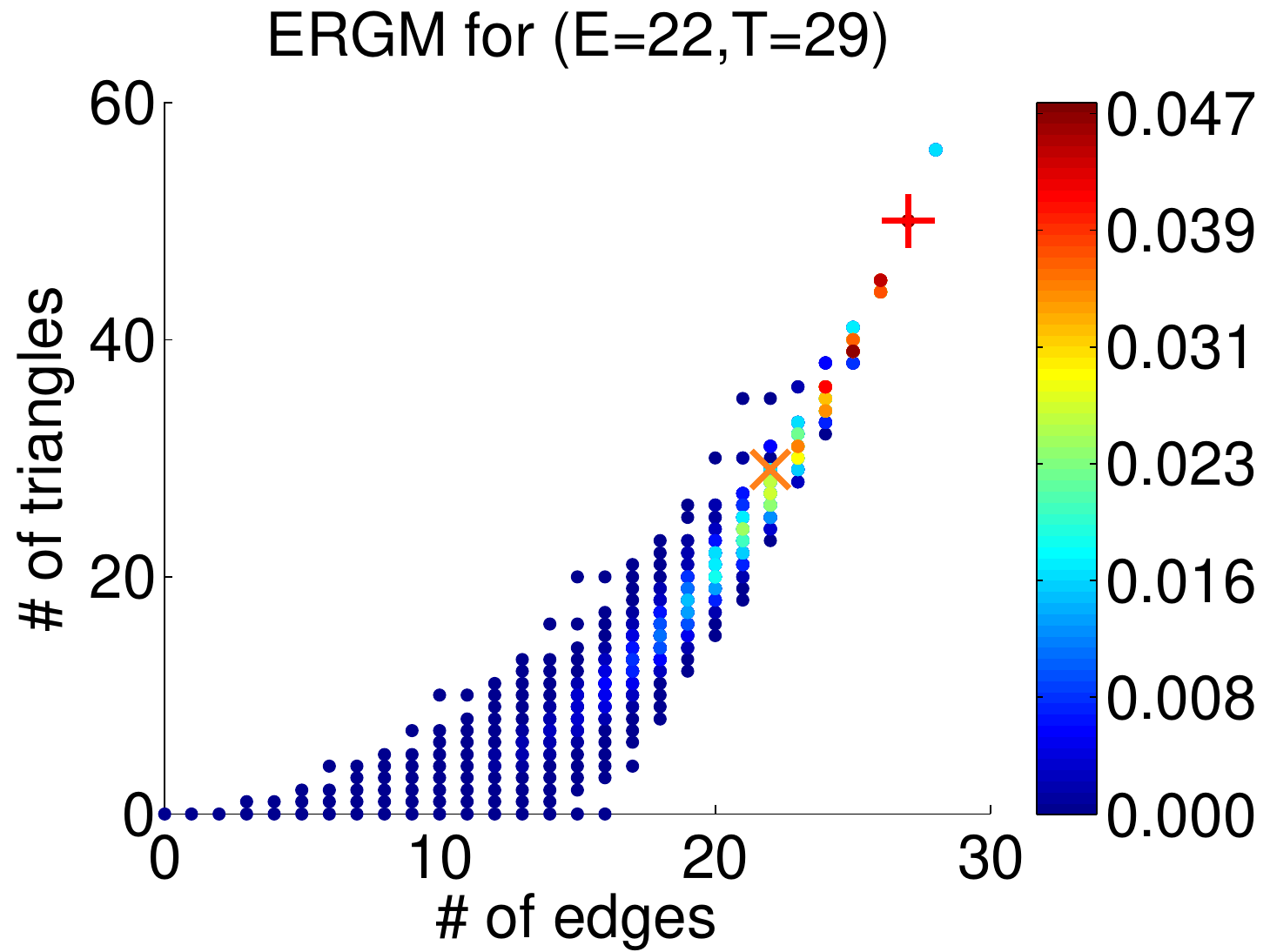}}
\end{minipage}
\begin{minipage}{.36\linewidth}
\subfigure[Probability mass for \NERGM]{\includegraphics[width=1\linewidth]{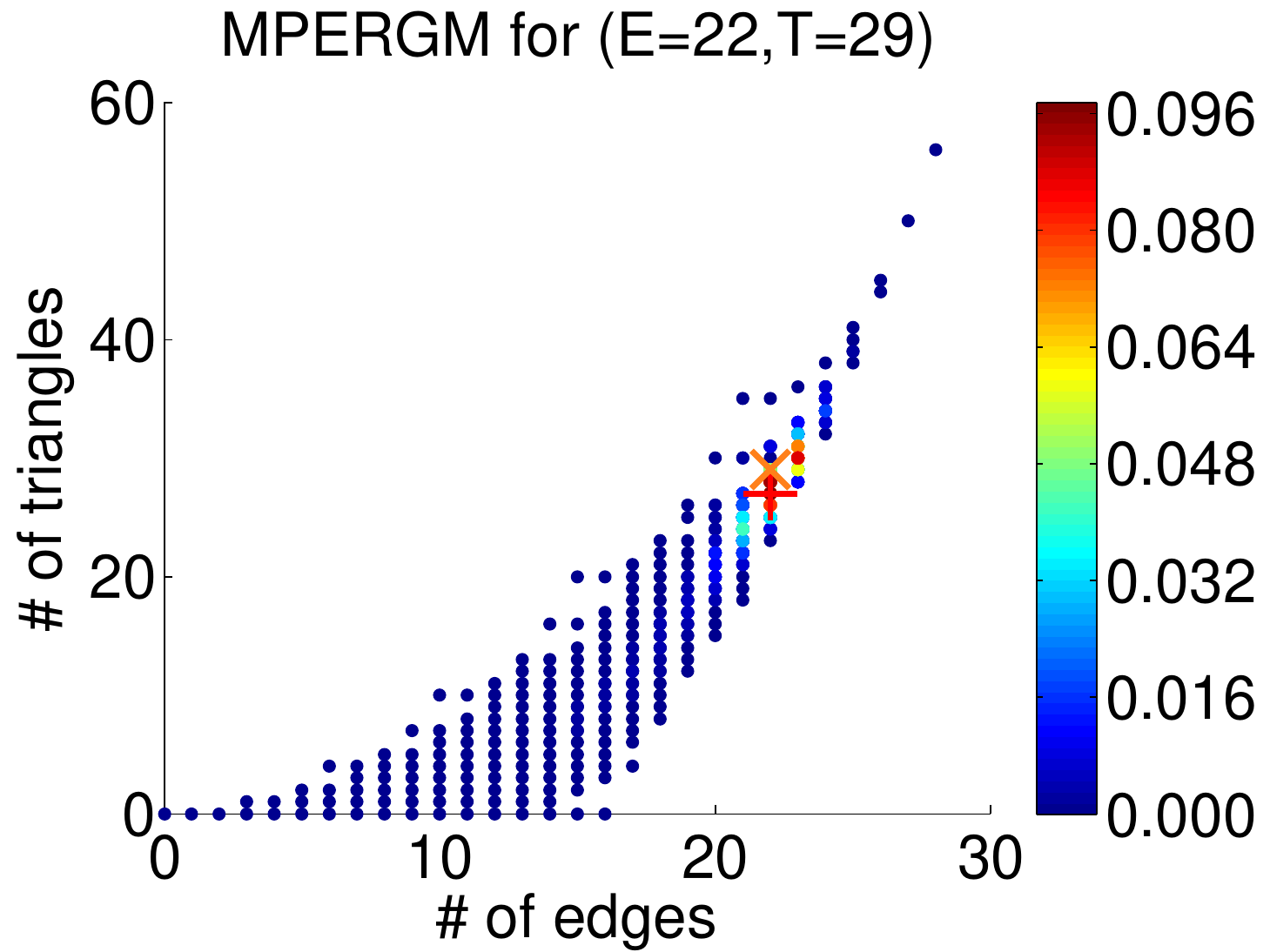}}
\end{minipage}
\caption{Degenerate \ERGM and Non-degenerate \NERGM. The models are
  trained based on the observation $t_{e}(G^\star)=22,t_{\triangle}(G^\star)=29$.
  The orange $\times$ is the observed statistics, and the red $+$ is the mode of the learned model. The color bar on the right from red to blue represents the probability mass changing from high to low.}\label{fig:g8degen}
\end{figure*}

We focus on addressing the second type of degeneracy; for more information of the reasons of the first type of degeneracy see \cite{Handcock2003b,Rinaldo2009}.  Several attempts have been made to address the second type of degeneracy issue: \citet{Handcock2008} proposed to use domain knowledge specific feature sets in addition to edge and triangle features; \citet{Hunter2008a,Hunter2006} used curved exponential families for \ERGMS; \citet{Caimo2010} suggested Bayesian \ERGMS, and \citet{Jin2011} devised an estimation procedure on stochastic approximation with varying truncation in parameter space. \citet{Lunga2011} suggested the degeneracy issue for interior points may be due to the bounded support $\sH$, and proposed spherical features for modifying the geometry of $\sH$. In summary, there are two main approaches towards fixing degeneracy: 1) modifying the geometry \citep{Handcock2008,Hunter2008a,Lunga2011}, and 2) limiting exploration in the canonical parameter space \citep{Caimo2010,Jin2011}. Our approach belongs in the first category.

\subsection{Mass-Preserving ERGMs}
To modify \ERGMS, we solve the optimization problem in
\eqref{eqn:nexp} with the uniform base measure $q\left(G\right)$ over
possible graphs $G\in\mathcal{G}_n$.  Let $t_a\left(G\right)=
K_{\vH}\left(\vt\left(G^\star\right); \vt\left(G\right)\right)$, a smoothed mass indicator in the neighborhood of the feature values for the observed graph.  The solution is an exponential family probability mass function
\begin{equation*}
\begin{split}
f\left(G\right) &= \frac{1}{Z\left(\vlambda,\lambda_a\right)}\exp\left[\inner{\vlambda}{\vt\left(G\right)} + \inner{\lambda_a}{t_a\left(G\right)}\right]\\
Z\left(\vlambda,\lambda_a\right) &= \sum_{G\in\mathcal{G}_n}\exp\left[\inner{\vlambda}{\vt\left(G\right)} + \inner{\lambda_a}{t_a\left(G\right)}\right].
\end{split}
\end{equation*}
which we refer to as {\em mass-preserving} ERGM (\NERGM).  The corresponding objective function
\begin{equation*}
l\left(\vlambda,\lambda_a\right) = \inner{\vlambda}{\vt\left(G^\star\right)} + \inner{\lambda_a}{t_a\left(G^\star\right)}-\ln{Z\left(\vlambda,\lambda_a\right)}-\beta\left|\lambda_a\right|
\end{equation*}
is concave.

There are several challenges with parameter estimation, most encountered before in \ERGM fitting \citep[e.g.,][]{Hunter2008a}.  As in the continuous case, the gradient cannot be computed in closed form except for graphs of small size ($\mathcal{G}_n$ for $n\leq 11$).  We therefore apply MCMC-MLE approach of \citet{Hunter2006}, computing $E_{f}\left[\vt\left(G\right)\right]$ in Algorithm \ref{alg:coord} as a sampled average $\frac{1}{S}\sum_{i=1}^{S}\left(\vt\left(G^i\right)\right)$ where $G^{1:S} \stackrel{i.i.d}{\sim}f\left(G|\vlambda,\lambda_a\right)$. There are, however, two complications with this approach.  One, graph sampling from \ERGMS is performed using Gibbs sampling and is computationally expensive.  Therefore, graphs $G^{1:S}$ are re-sampled only once in several iterations, and reused for other iterations with weights equal to the posterior probabilities.  Two, the resulting distribution over graphs can be multi-modal, and according to \citet{Jin2011,Hunter2006}, the sampler can get stuck around the closest mode leading to an incorrect estimate of the gradient.  Instead of performing line search, we use the direction of the gradient with a predefined step-size.

\section{Experimental Evaluation}\label{sec:experiments}

\subsection{Non-Parametric Exponential Family Density Estimation}
We illustrate the behavior of the proposed non-parametric
density estimator matching first and second order moment constraints (NPGaussian, i.e. $\vt(x)=\left(x,x^2\right)$) in the univariate setting. Normal density (in $\EF$, $\mathcal{N}(0,1)$), mixture of two normals (not in $\EF$, $\frac{1}{2}\mathcal{N}(-3,1)+\frac{1}{2}\mathcal{N}(3,1)$), and a t-distribution (not in $\EF$, df=6) are used for simulating i.i.d samples. We vary the sample size from $10$ to $1000$ for training and compute the out-of-sample likelihood with an evaluation set of $100000$ samples for testing.
 We compared the performance of our non-parametric approach, the model from the true functional family, and another non-parametric approach (\KDE). There are two sets of tuning parameters, bandwidth $h$ and the box constraint parameter $\beta$, assumed to be the same for all $i=1,\dots,n$.  $\beta$ was set according to a fixed schedule
$\vec{\beta}(n)=O(1/\sqrt{n})$.  $h$ (both for KDE and for our approach) was determined based on cross-validated log-likelihood.\footnote{Similar to \KDE, the choice of kernel
width $h$ is important for obtaining good estimates. To
test how the non-parametric exponential family is affected by the
choice of $h$, we employed the same Gaussian kernel
function
to do density estimation with both \KDE and non-parametric
Gaussian. It appears that the best bandwidth are different.} 
Gaussian kernel function is used for NPGaussian and for \KDE.
For estimating mixture distribution, the estimated NPGaussian model provides an approximation better than \KDE, and perhaps not surprisingly, better than GMM when the training sample size is small (Figure \ref{fig:lldenest}(a)).
For estimating normal density, the NPGaussian model
quickly converges to the normal density as suggested by Theorem
\ref{thm:true} (Figure \ref{fig:lldenest}(b)). We also consider the case when the true sufficient
statistics are given to us (constrained NPGaussian, CNPG). The CNPG model shows improvement over NPGaussian for small $n$.  However, as the training sample size increases, both CNPG and NPGaussian show similar performance as the moment constraints $\vt(x)$ are more accurately approximated.  We also experimented with $O(1/\log(n))$ regularization schedule for $\beta$s to estimate the mixed normal distribution.  As $n$ increase, the solution for NPGaussian is too sparse and gives a worse performance than \KDE(Figure \ref{fig:lldenest}(a)). However, it also enjoys a sparse set of augmented parameters $\vlambda_a$ (Figure \ref{fig:lldenest}(d)), whereas with schedule $O(1/\sqrt{n})$, NPGaussian keeps adding non-zero $\lambda_a$s.


\begin{figure*}[th]
\centering
\subfigure[]{\includegraphics[width=0.235\linewidth]{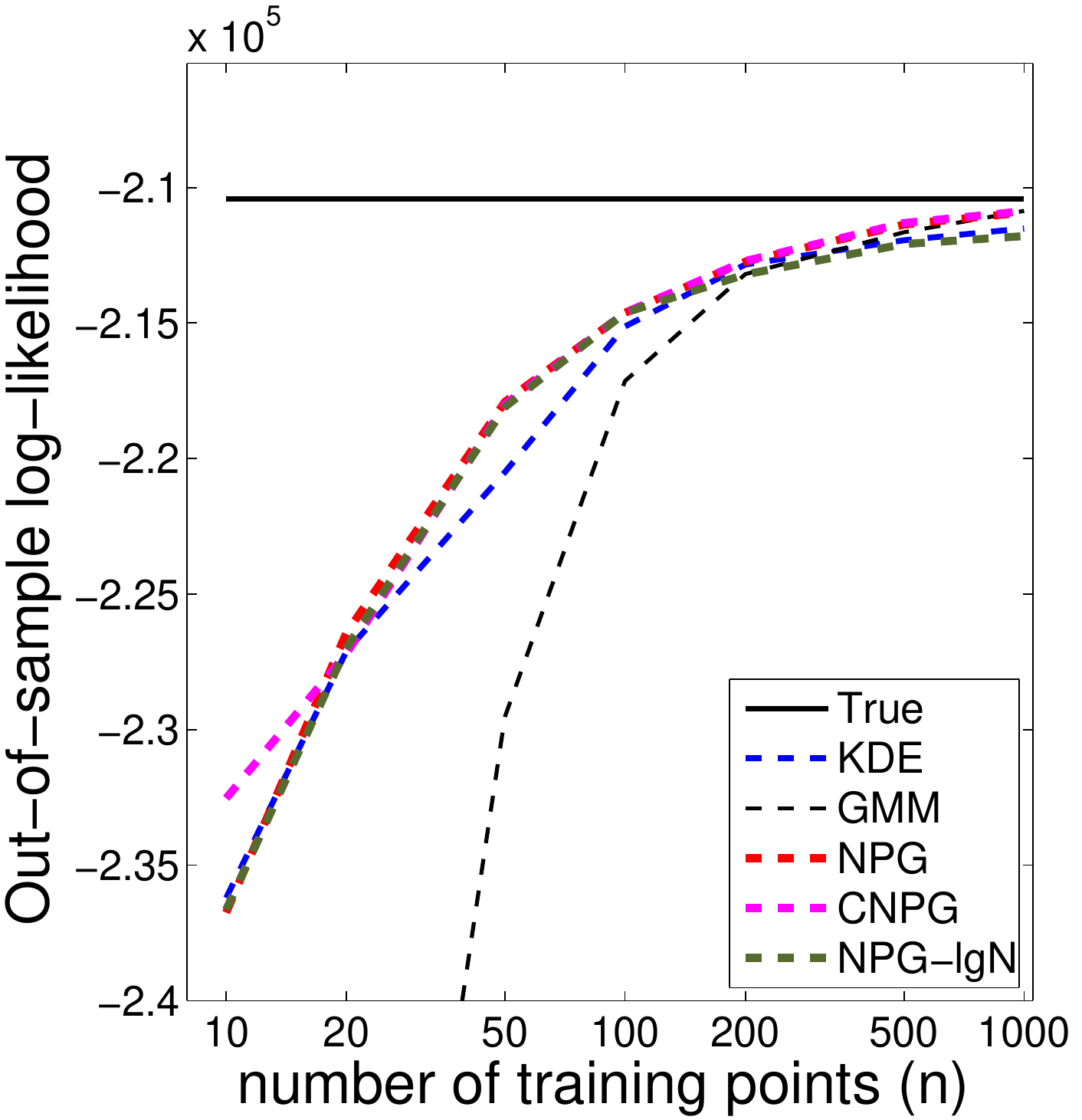}}
\subfigure[]{\includegraphics[width=0.235\linewidth]{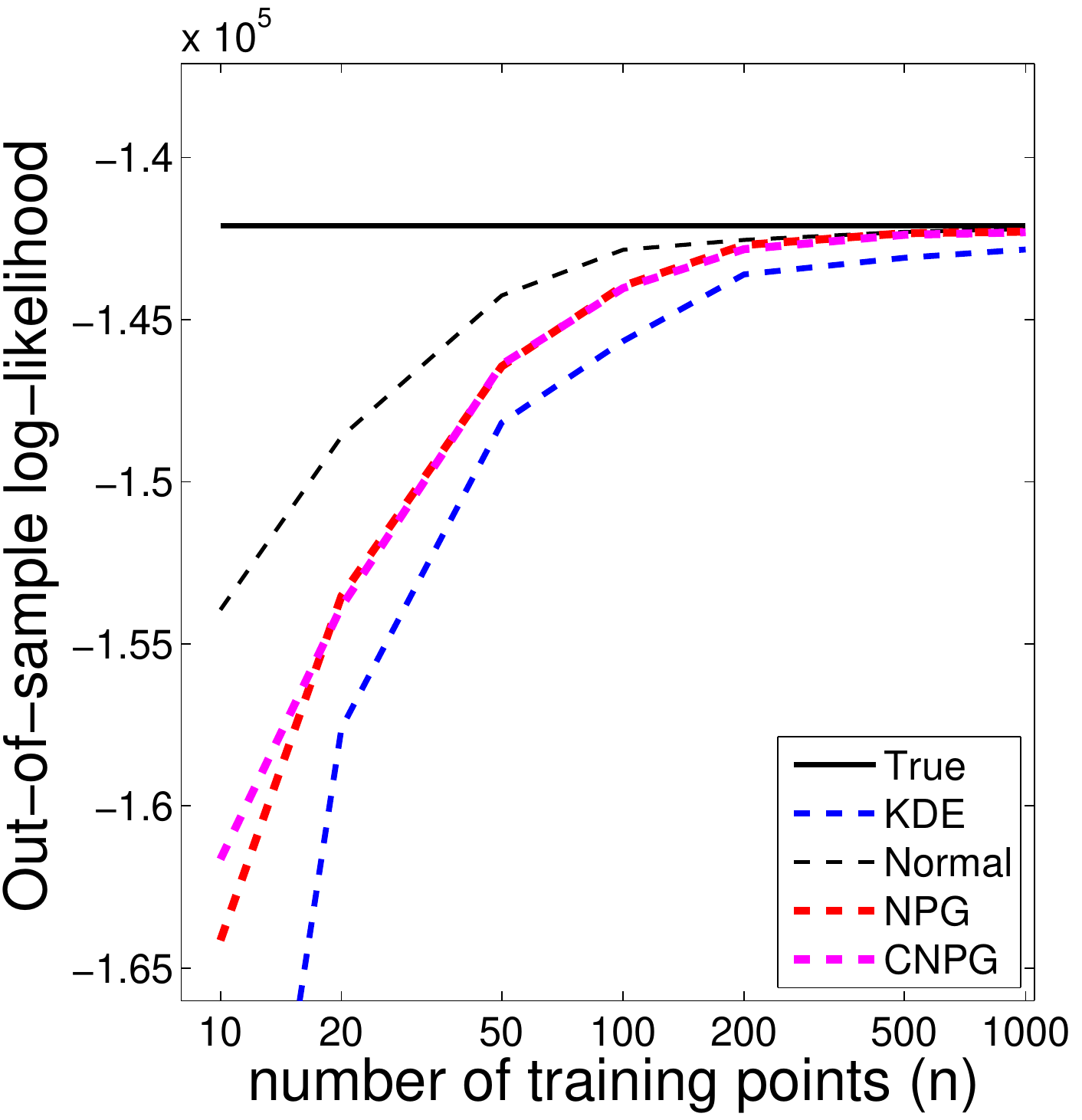}}
\subfigure[]{\includegraphics[width=0.235\linewidth]{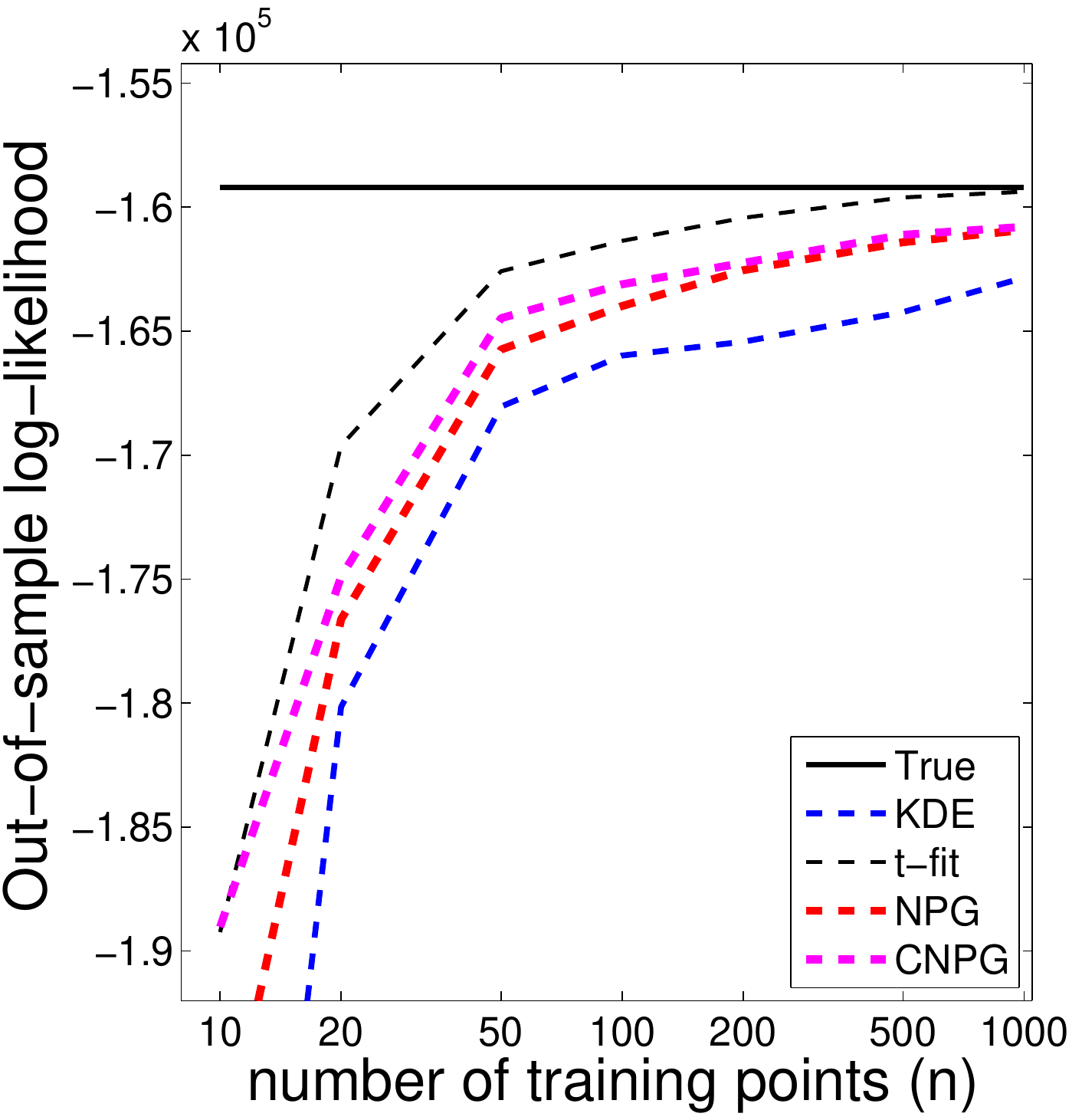}}
\subfigure[]{\includegraphics[width=0.235\linewidth]{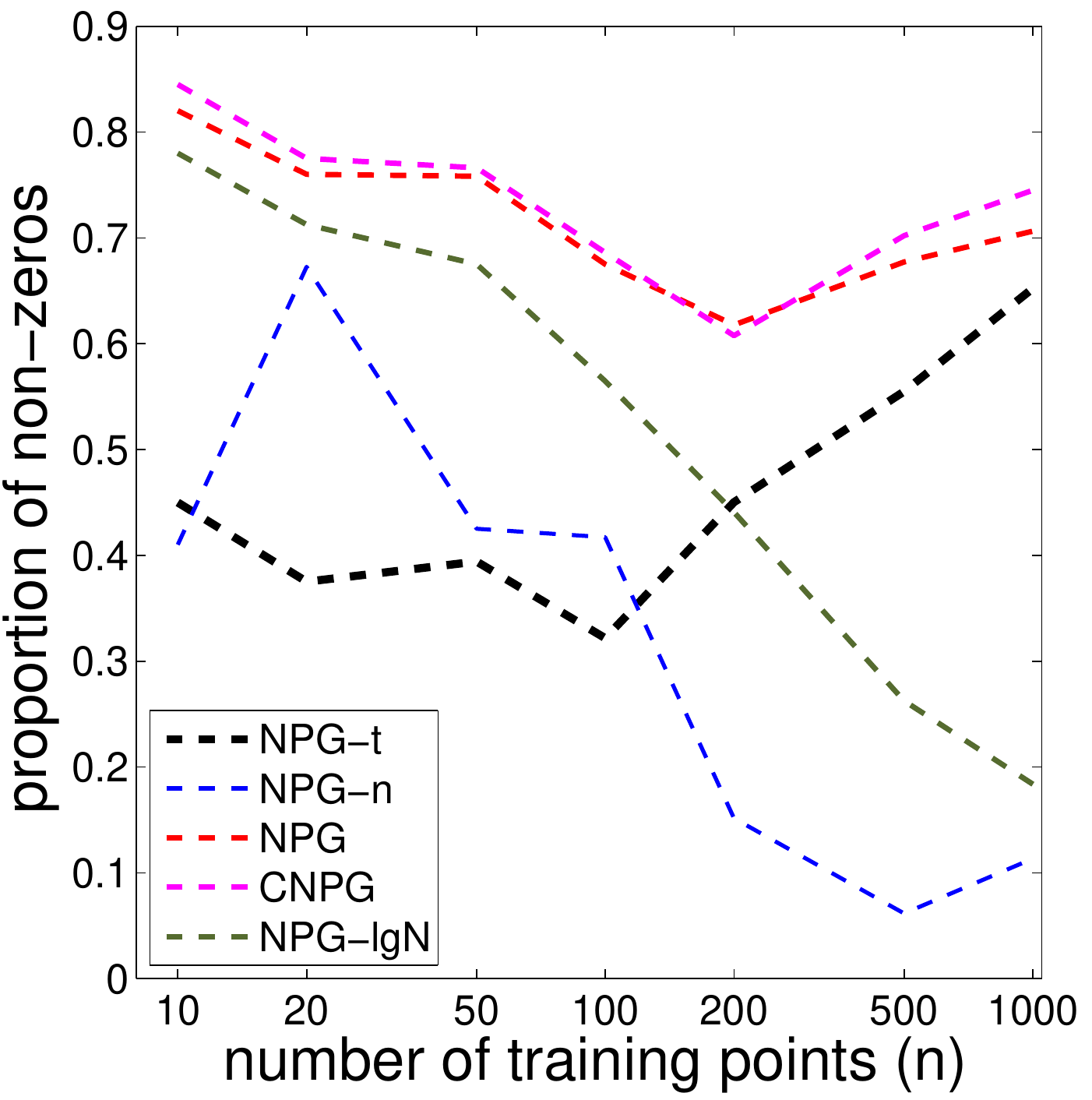}}
\caption{Estimating simple one dimensional densities. Results are
  averaged over 20 runs. The $x$ axis is in log scale. (a) Mixed normal distribution (b) Normal distribution (c) t distribution (d) Number of non-zero $\vlambda_a$s. In Figure(a,b,c), {\tt NPG}: NPGaussian with $O(1/\sqrt{n})$ schedule, {\tt NPG-lgN}: NPGaussian with $O(1/\log(n))$ schedule, {\tt CNPG}: constrained NPGaussian with true global moment statistics. In (d), {\tt NPG-t}: number of non-zeros for estimating t distribution with NPG, {\tt NPG-n}: number of non-zeros for estimating normal distribution with NPG, {\tt NPG}: number of non-zeros for estimating mixed normal distribution with NPG, {\tt CNPG}: number of non-zeros for estimating mixed normal distribution with CNPG, {\tt NPG-lgN}: number of non-zeros for estimating mixed normal distribution with NPG-lgN. }\label{fig:lldenest}
\end{figure*}

\subsection{Modeling Graphs with \NERGMS}
We evaluate the fit of the estimated models by comparing local statistics of the observed graph to that of the samples generated from the estimated distribution.\footnote{See \cite{Hunter2008a} for a discussion on the evaluation of fit for social networks.}

We
make use of three sets of local statistics commonly used as goodness-of-fit
measures for \ERGMS 
\cite{Hunter2008a}: the {\em degree distribution} (the proportion of nodes with exactly $k$ neighbors),
{\em edgewise shared partner distribution} (the proportion of edges joining nodes with exactly $k$ neighbors in common),
and the {\em minimum geodesic distance} (the proportion of connected node-pairs which has a minimum distance of $k$).

\begin{table*}[!tb]
\centering
\caption{Social network data sets. {\tt g8}: The 8-node graph as in Figure \ref{fig:g8degen}(a); {\tt Do}: The dolphins data set \citep{Lusseau2003}; {\tt Kp}: The Kapferer data set \citep{Hunter2008a}; {\tt Fl}: The Florentine Business data set \citep{Hunter2008a}; {\tt Fa}: The Faux.Mesa.High data set \citep{Hunter2008a}; {\tt Ja}: The Jazz data set \citep{Gleiser2003}; {\tt Ad}: The AddHealth data set \citep{Harris2008}; {\tt Fb}: The Facebook data set \citep{Moreno2009}; {\tt Em}: The Email data set \citep{Guimera2003}.}   \label{tab:sndata}
\begin{tabular}{|c|c|c|c|c|c|c|c|c|c|}
\hline
& {\tt g8} & {\tt Do} & {\tt Kp} & {\tt Fl} & {\tt Fa} & {\tt Ja} & {\tt Ad} & {\tt Fb} & {\tt Em} \\
\hline
$|V|$ & 8 & 62 & 39 & 16 & 206 & 198 & 803 & 1024 & 1133\\
\hline
$t_e(G^\star)$ & 22 & 159 & 158 & 15 & 203 & 2742 & 1985 & 1012 & 5451\\
\hline
$t_{\triangle}(G^\star)$ & 29 & 95 & 201 & 5 & 62 & 17899 & 649 & 116 & 5343\\
\hline
No. unique sampled graph & 99 & 100 & 100 & 100 & 100 & 100 & 100 & 96 & 100\\
\hline
No. unique features & 33 & 33 & 30 & 27 & 14 & 70 & 72 & 74 & 70 \\
\hline
No. max hop of samples & 22 & 306 & 263 & 39 & 341 & 1435 & 999 & 385 & 1275\\
\hline
\end{tabular}
\end{table*}

\begin{figure*}[!tb]
\centering
\includegraphics[width=0.98\linewidth]{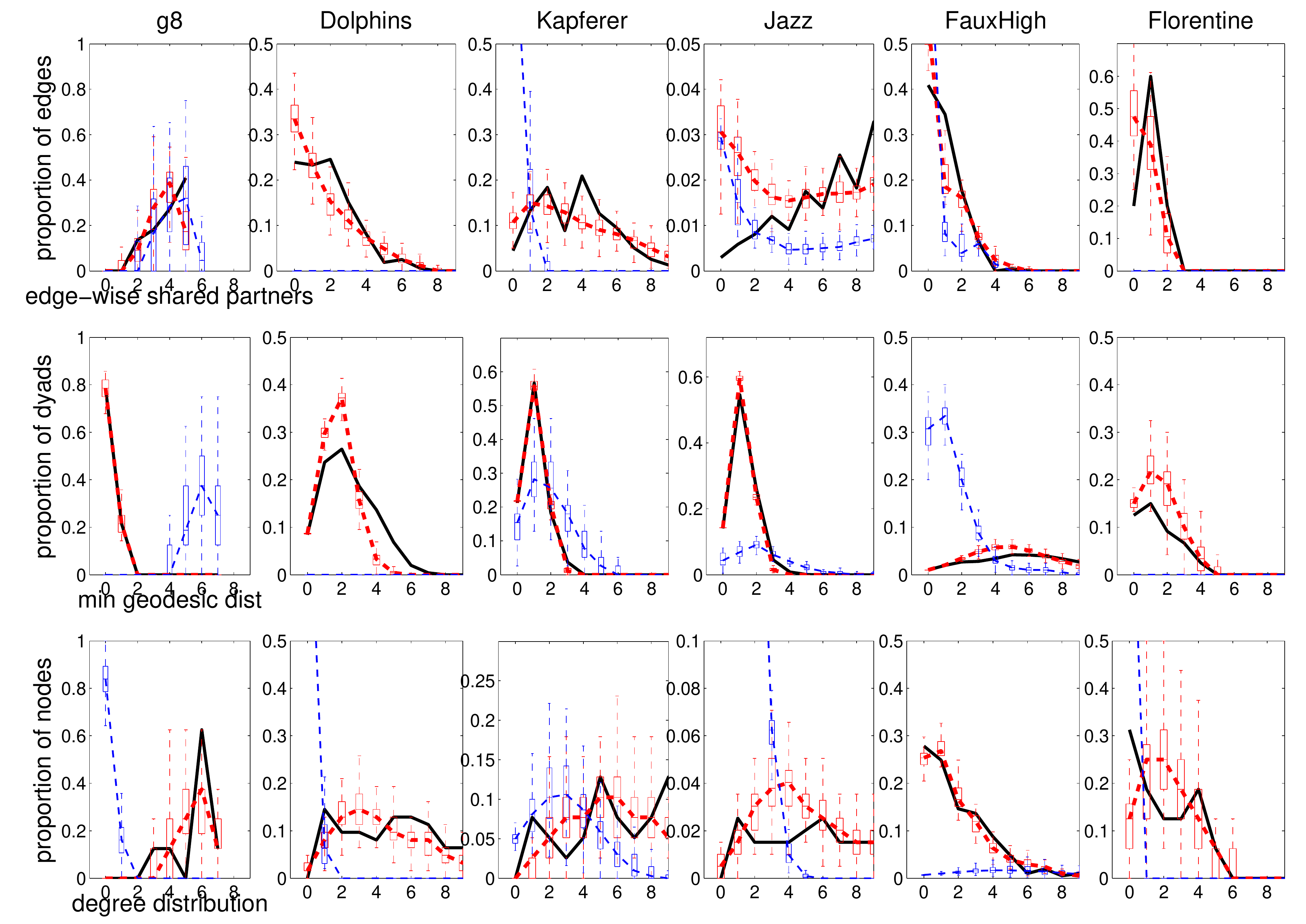}
\caption{Goodness of fit for small graphs. Gaussian kernel
  functions are used for \NERGM. \ERGM is shown in blue dashed lines, and \NERGM is shown in red dashed lines. Black lines are the statistics for $G^\star$, being closer to black line means better fit.}\label{fig:gof}
\end{figure*}

\begin{figure*}[!h]
\centering
\includegraphics[width=0.98\linewidth]{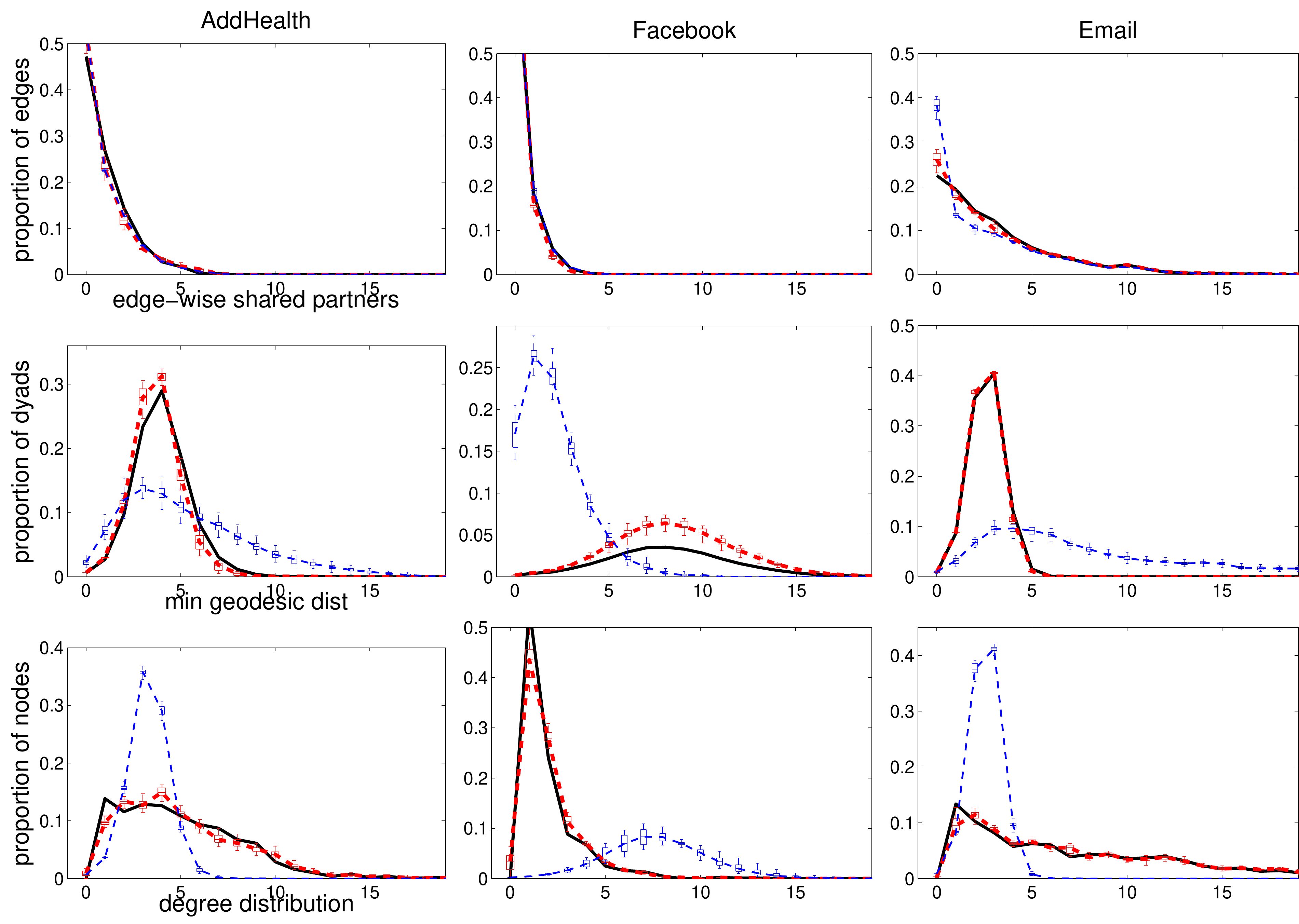}
\caption{Goodness of fit for large graphs. Gaussian kernel
  functions are used for \NERGM. \ERGM is shown in blue dashed lines, and \NERGM is shown in red dashed lines.}\label{fig:gof_thou}
\end{figure*}

We consider the number of edges and triangles as sufficient statistics, $\vt\left(G\right)=\left(t_{e}\left(G\right), t_{\triangle}\left(G\right)\right)$.  First, we consider the toy domain of graphs with $8$ nodes,
$\Gnk{8}$.  We enumerate all possible $K=12346$ non-isomorphic graphs and resulting feature tuples, and compute probability mass entries $\pi_1,\dots,\pi_K$. We trained our \NERGM with
a Gaussian kernel function with $h=8, \beta=0.2$. Figure \ref{fig:g8degen} shows that \NERGM puts larger probability mass around $G^\star$.  

We also estimated \NERGMS for several social network data sets,
ranging in the number of nodes from $16$ to $1024$, and with varying
density of edges.  Since the number of nodes $n$ for these graphs are
too large to enumerate $\mathcal{G}_n$, the graphs are drawn using
Gibbs sampler, and the parameters for \NERGMS (and \ERGMS, using the R package {\tt ergm}
\citep{Hunter2008a}) are estimated using MCMC-MLE. Then 100 samples
were generated using the Markov Chain with learned parameters. For
\NERGM, the Markov chain was initialized with the example graph,
whereas we are not sure what initial state was used by {\tt ergm}. We
then run the chain for a burn-in of 1000 iterations and then use 100
iterations between each draw. The 100 samples are then used to plot
the graph statistics for goodness-of-fit test in Figure \ref{fig:gof}
and Figure \ref{fig:gof_thou}. For each estimated model, the statistics in Figure
\ref{fig:gof} \& \ref{fig:gof_thou}
were generated from 100 sampled graphs obtained by running Gibbs with
1000 iterations for burn-in and 100 iterations between samples. We initialized our Markov chain with the example graph, whereas we are not sure what initial state did {\tt ergm} use. We used a set of
hand-tuned step-size and $h$ for different data sets, and
re-scaled the edge and triangle features by a factor of
$\frac{1}{t_e(G^\star)}$ and
$\frac{1}{t_{\triangle}(G^\star)}$. Empirically, we find
$h\approx 8$ and a predefined step-size $10$ works well for small
graphs. For graphs with several thousands of nodes, the pre-defined step-size and
$h$ needs to be larger to guarantee reasonable variance and
concentration of the model. In Figure \ref{fig:gof}, \ERGM is degenerate for the {\tt Florentine} and {\tt Dolphins} dataset, because most sampled graphs have 0-degree nodes (third row), while \NERGM is able to generate samples scoring a similar set of graph statistics. In order to investigate the variance of the learned \NERGM, we count the number of samples that are different in structure (not counting isomorphism) or different in features (number of edges and triangles), while recording the maximum number of unique edge-flips needed to get from the initial state to the sampled graph (number of max hops). The results in Table \ref{tab:sndata} suggests that our sampler explores $\Gn$ with a considerable range.

\section{Discussion and Conclusions}\label{sec:conclusion}

We have proposed a non-parametric exponential family model that is capable of approximating distributions that do not fall within the exponential family empirically.  As the data size increases, this model can approximate arbitrary (continuous) densities with tuning parameters controlling the sparsity. And if the true density falls within the exponential family characterized by the
chosen features,
the estimated non-parametric model converges to the parametric one.  The proposed framework results in a non-parametric density estimator which admits global constraints; if available, this information may require fewer data points to approximate the underlying density.  The resulting MLE optimization problem is concave with $\ell_1$ penalty term, but raises a computational challenge because the number of inequality constraints is proportional to the number of data points. We also adapted the approach to modify exponential random graph models for graphs to come up with an exponential family model averse to model degeneracy.

As future directions, we would like to investigate the rules for selecting bandwidth parameters, the acceleration of the optimization problem, and efficient sampling techniques for sampling from the non-parametric exponential family.

\section*{Acknowledgements}
The authors thank Anthony Quas (via \url{mathoverflow.net}) and Herman Rubin for their help with proofs of the theoretical results.  This research was supported by the NSF Award IIS-0916686.

\newpage
\bibliography{npexp}
\bibliographystyle{plainnat}

\clearpage

\appendix
\section{Proofs}\label{sec:proofs}
\subsection{Theorem \ref{thm:true}}\label{sec:proofs:thm1}
To prove Theorem \ref{thm:true}, let's take a closer look at the
augmented constraints. First consider the augmented
statistics $\vt_a^{i,\star}$ for sample $\vx^i$.

\begin{lemma}\label{lemma:kde}
$E_{\hat{f}_n\left(\vx\vert\vx^{1:n}\right)}\left[t_a^i\left(\vx\right)\right]=f_n^{KDE}\left(\vx^i \vert \vx^{1:n}\right)$
\end{lemma}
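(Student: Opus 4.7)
The plan is a direct computation: expand the expectation on the left-hand side using the definition of the empirical distribution and the augmented statistic, and then recognize the result as the KDE evaluated at $\vx^i$.

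First, I would substitute $\hat{f}_n(\vx\vert\vx^{1:n}) = \frac{1}{n}\sum_{j=1}^n \delta(\vx - \vx^j)$ and $t_a^i(\vx) = K_{\vH}(\vx^i;\vx)$ to obtain
\begin{equation*}
E_{\hat{f}_n(\vx\vert\vx^{1:n})}\left[t_a^i(\vx)\right] = \int_{\sX} K_{\vH}(\vx^i;\vx)\,\frac{1}{n}\sum_{j=1}^n \delta(\vx - \vx^j)\,\diff\vx = \frac{1}{n}\sum_{j=1}^n K_{\vH}(\vx^i;\vx^j),
\end{equation*}
using the sifting property of the Dirac delta.

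Next I would note that the univariate kernels $K$ used to build $K_{\vH}$ in this paper (e.g., the uniform kernel $K_U$ and the Gaussian kernel $K_{\mathcal{N}}$) are symmetric about the origin, i.e. $K(\vu) = K(-\vu)$, so that $K_{\vH}(\vx^i;\vx^j) = |\vH|^{-1/2} K\bigl(\vH^{-1/2}(\vx^i-\vx^j)\bigr) = |\vH|^{-1/2} K\bigl(\vH^{-1/2}(\vx^j-\vx^i)\bigr) = K_{\vH}(\vx^j;\vx^i)$. Therefore
\begin{equation*}
\frac{1}{n}\sum_{j=1}^n K_{\vH}(\vx^i;\vx^j) = \frac{1}{n}\sum_{j=1}^n K_{\vH}(\vx^i;\vx^j) = f_n^{\KDE}(\vx^i\vert\vx^{1:n}),
\end{equation*}
by the definition of the KDE given earlier in the paper, which completes the proof.

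There is no real obstacle here; the only subtle point worth flagging is the (mild) reliance on kernel symmetry to match the two notational conventions used for $K_{\vH}(\cdot;\cdot)$, one in the definition of $t_a^i$ and the other in the definition of $f_n^{\KDE}$. If we instead adopt the convention that $K_{\vH}(\vy;\vx)$ is a function of $\vx$ centered at $\vy$ (which is consistent with how $f_n^{\KDE}$ is written), the identity is immediate without invoking symmetry. Either way, the lemma is a one-line consequence of the definitions.
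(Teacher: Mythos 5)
Your proposal is correct and matches the paper's own (one-line) proof: expand the expectation under the empirical distribution to get $\frac{1}{n}\sum_{j=1}^{n}K_{\vH}\left(\vx^i;\vx^j\right)$ and identify this as $f_n^{KDE}\left(\vx^i\vert\vx^{1:n}\right)$. Your additional remark about kernel symmetry reconciling the two notational conventions for $K_{\vH}\left(\cdot;\cdot\right)$ is a fair observation but is left implicit in the paper; it does not change the argument.
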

\begin{proof}
\begin{align}
E_{\hat{f}_n\left(\vx\vert\vx^{1:n}\right)}\left[t_a^i\left(\vx\right)\right]
&=
\frac{1}{n}\sum\limits_{j=1}^{n}K_{\vH}\left(\vx^i;\vx^j\right)
= f_n^{KDE}\left(\vx^i \vert \vx^{1:n}\right),\nonumber
\end{align}
\end{proof}
i.e., the augmented statistics are the \KDE estimates for samples
$\vx^1 \dots \vx^n$ respectively. \citet{Parzen1962} proved the
pointwise convergence of \KDES and \citet{Nadaraya1965} proved the
uniform convergence for \KDES under further assumptions. We include
the first half of \citep[][Theorem 1]{Nadaraya1965} below since it is
essential for proving both Theorem \ref{thm:true} and Theorem
\ref{thm:nef}.

\begin{theorem}\label{thm:ekde}
Let  $\overline{f}_n^{KDE}\left(\vx\right) =
E_{f\left(\vy^{1:n}\right)}\left[f_n^{KDE}\left(\vx\vert\vy^{1:n}\right)\right]$
be the expected value of the \KDE density given sample points
$\vy^{1:n}\iid f$.

Suppose $K_{\vH}\left(\vx\right): \vx \in \sX \to \RR$ is a function of bounded variation
and $f\left(\vx\right)$ is a uniformly continuous density function,
and the series $\sum_{n=1}^{\infty}e^{-\gamma n \left|\vH\right|}$
converges for every positive value of $\gamma$.

Then $f_n^{KDE}\left(\vx \vert \vx^{1:n}\right)
\stackrel{a.s.}{\to} \overline{f}_n^{KDE}\left(\vx\right)$ uniformly
on $\sX$.

That is, $\sup_{\vx \in \sX} \left|f_n^{KDE}\left(\vx\right)-\overline{f}_n^{KDE}\left(\vx\right)\right|
\stackrel{a.s.}{\to} 0$.
\end{theorem}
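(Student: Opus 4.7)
The plan is a classical three-step chaining argument that upgrades pointwise almost sure convergence to uniform almost sure convergence: (i) establish a pointwise exponential concentration inequality for $Z_n(\vx) := f_n^{KDE}(\vx\vert\vx^{1:n}) - \overline{f}_n^{KDE}(\vx)$; (ii) use the bounded variation of $K_{\vH}$ to replace $\sup_{\vx\in\sX}\lvert Z_n(\vx)\rvert$ by a maximum over a sufficiently fine (but only polynomially large) grid, plus a deterministic discretization error; (iii) apply a union bound together with the assumed convergence of $\sum_n e^{-\gamma n \lvert\vH\rvert}$ and Borel--Cantelli.

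For step (i), observe that $f_n^{KDE}(\vx\vert\vx^{1:n}) = \frac{1}{n}\sum_{i=1}^n K_{\vH}(\vx;\vx^i)$ is an empirical average of i.i.d.\ bounded random variables, where boundedness follows because any $K$ of bounded variation is bounded and hence $\lvert K_{\vH}(\vx;\vy)\rvert \le M\lvert\vH\rvert^{-1/2}$ for some constant $M$. Hoeffding's inequality then yields
\begin{equation*}
\prob\bigl\{\lvert Z_n(\vx)\rvert > \eps\bigr\}\;\le\; 2\exp\!\bigl(-c\,\eps^2\, n\,\lvert\vH\rvert\bigr)
\end{equation*}
for a constant $c>0$ depending on $M$. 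For step (ii), bounded variation allows the decomposition $K_{\vH}=K_{\vH}^+-K_{\vH}^-$ with $K_{\vH}^{\pm}$ monotone in each coordinate (Hardy--Krause variation in the multivariate case). For a monotone function, the empirical process $Z_n$ evaluated at an arbitrary point lies between its values at the two nearest grid points, up to the jump size of the limit. Thus, choosing a grid of cardinality $N_n$ polynomial in $n$ (with spacing vanishing slower than $\lvert\vH\rvert^{1/2}$), one gets
\begin{equation*}
\sup_{\vx\in\sX}\lvert Z_n(\vx)\rvert \;\le\; \max_{1\le j\le N_n}\lvert Z_n(\vx_j)\rvert \;+\; r_n,
\end{equation*}
where $r_n\to 0$ deterministically using uniform continuity of $\overline{f}_n^{KDE}$ (which follows from uniform continuity of $f$ and the fact that $\overline{f}_n^{KDE}$ is $f$ convolved with $K_{\vH}$).

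For step (iii), a union bound over the $N_n$ grid points gives
\begin{equation*}
\prob\Bigl\{\sup_{\vx\in\sX}\lvert Z_n(\vx)\rvert > 2\eps\Bigr\}\;\le\; 2 N_n \exp\!\bigl(-c\,\eps^2\, n\,\lvert\vH\rvert\bigr)
\end{equation*}
once $n$ is large enough that $r_n<\eps$. Because $N_n$ grows only polynomially while the series $\sum_n e^{-\gamma n\lvert\vH\rvert}$ converges for every $\gamma>0$, the right-hand side is summable in $n$, and Borel--Cantelli delivers $\sup_\vx\lvert Z_n(\vx)\rvert\stackrel{a.s.}{\to}0$.

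The main obstacle I anticipate is step (ii) in the multivariate setting: translating scalar ``bounded variation'' into a usable discretization bound requires either a Hardy--Krause-type decomposition into monotone coordinate pieces, or an alternative route through the Fourier transform of $K$. A secondary technical nuisance is that if $\sX$ is unbounded, one must first truncate to a compact set whose complement carries negligible mass of both $f$ and $K_{\vH}(\vx;\cdot)$; the hypothesis that the relevant series converges for every positive $\gamma$ is exactly what is needed to absorb the polynomial cardinality $N_n$ of the grid as well as the volume of such a truncation, so the quantitative bookkeeping should go through.
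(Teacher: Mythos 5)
First, a point of reference: the paper does not prove this statement at all --- it is quoted verbatim as the first half of Theorem~1 of \citet{Nadaraya1965} and used as an imported lemma. So the comparison is really between your sketch and Nadaraya's classical argument. That argument is quite different from yours and much shorter: writing $f_n^{KDE}(\vx)-\overline{f}_n^{KDE}(\vx)=\left|\vH\right|^{-1/2}\int K\bigl(\vH^{-1/2}(\vx-\vy)\bigr)\,d(F_n-F)(\vy)$ and integrating by parts against the empirical distribution function gives the deterministic bound $\sup_{\vx}\bigl|f_n^{KDE}(\vx)-\overline{f}_n^{KDE}(\vx)\bigr|\leq \left|\vH\right|^{-1/2}V(K)\,\sup_{\vy}\left|F_n(\vy)-F(\vy)\right|$, where $V(K)$ is the total variation of $K$; the Dvoretzky--Kiefer--Wolfowitz inequality then yields a tail bound $C e^{-2n\eps^{2}\left|\vH\right|/V(K)^{2}}$ with \emph{no covering-number prefactor}, and Borel--Cantelli applies directly. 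The bounded-variation hypothesis exists precisely for that integration by parts, and the series hypothesis is calibrated to exactly that exponent.

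Your route (Hoeffding, discretization, union bound) is a recognizable modern alternative, but as written it has two genuine gaps. First, step (iii) does not follow from the stated hypotheses: you need $\sum_n N_n e^{-c\eps^2 n\left|\vH\right|}<\infty$ with $N_n$ polynomial in $n$, and convergence of $\sum_n e^{-\gamma n\left|\vH\right|}$ for every $\gamma>0$ does \emph{not} imply this. For example, take $n\left|\vH_n\right|=3\log n$ along $n=2^k$ and $n\left|\vH_n\right|=n$ otherwise: the hypothesized series converges for every $\gamma>0$, yet $\sum_n n^{p}e^{-\gamma n\left|\vH_n\right|}$ diverges for $p>3\gamma$. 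Your argument really needs the stronger (and standard, but not assumed) condition $n\left|\vH\right|/\log n\to\infty$; Nadaraya's DKW route is exactly what lets the theorem get by without it. Second, step (ii) is shakier than a ``technical nuisance'': the Jordan (or Hardy--Krause) decomposition writes $K=K^{+}-K^{-}$ with $K^{\pm}$ monotone and bounded, but such monotone pieces are generally \emph{not integrable} on $\RR^m$, so the bound you invoke for the discretization error --- uniform continuity of $f$ convolved against $K^{\pm}_{\vH}$ --- does not control the increment $\overline{f}_n^{\pm}(\vx_{j+1})-\overline{f}_n^{\pm}(\vx_j)$ by $\omega_f(\delta_n)$ times a finite constant. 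One has to control the oscillation of the \emph{empirical} sum between grid points directly via the variation of $K$ over a cell, which is essentially re-deriving the integration-by-parts bound. If you want a self-contained proof, I would recommend following Nadaraya's route; your scheme can be repaired, but only under hypotheses stronger than those in the statement.
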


\begin{remark}\label{remark:ekde}
It is helpful to note that given $\vx^1 \dots \vx^n \iid f$,
\begin{align}
\overline{f}_n^{KDE}\left(\vx^i\right)
&=\int\limits_{\vx^1\dots\vx^n} \prod\limits_{j=1}^{n}f(\vx^j)\frac{1}{n}
\sum\limits_{j=1}^{n} K_{\vH}\left(\vx^i ; \vx^j\right) \diff {\vx^1
\dots \vx^n}\nonumber\\
&=\frac{1}{n}\sum\limits_{j=1}^{n}\int\limits_{\vx^j\in\sX}K_{\vH}\left(\vx^i;
\vx^j\right) f\left(\vx^j\right)\diff {\vx^j} = \int\limits_{\vx\in\sX}K_{\vH}\left(\vx^i ; \vx\right) f\left(\vx\right)\diff
   {\vx} = E_f\left[t_a^i\left(\vx\right)\right].\nonumber
\end{align}
\end{remark}

\noindent{\bf Theorem \ref{thm:true}}
For a density $f\left(\vx\right)\in\EF_{\vt}$, with domain space $\sX$, feature space $\sH$, and canonical parameters $\vlambda \in \sC\in\RR^d$.
Suppose $\vx^1,\dots,\vx^n \triangleq \vx^{1:n}$ is a sequence
of independent and identical random vectors drawn from $f$.  Let
$f_n^{NE}\left(\vx\vert \hat{\vtheta}_n, \vx^{1:n} \right)\in\mathcal{NEF}_{\vs}$
be the MLE solution of \eqref{eqn:nexp} $\hat{\vtheta}_n =
\left(\tilde{\vlambda}_n, \tilde{\vlambda}_{a,n}\right)$, with all $\beta_i=\beta>0$, $i=1,\dots,n$. Assuming
\begin{enumerate}
\setlength{\itemsep}{0em}
\setlength{\parskip}{0pt}
\setlength{\parsep}{0pt}
\item $\sX$ is compact,
\item 
$\vt$ is continuous,
\item $\EF_{\vt}$ is a family of uniformly equicontinuous functions w.r.t $\vx$,
\item Kernel $K$ has bounded variation and has a
  bandwidth parameter $\vH$ such that the series $\sum_{n=1}^{\infty}e^{-\gamma n \left|\vH\right|}$
converges for every positive value of $\gamma$,
\end{enumerate}
then as $n\to\infty$, $\tilde{\lambda}_{a,n}^i\stackrel{p}{\to}0,
\forall i=1,\dots,n$ and $\tilde{\vlambda}_n\stackrel{p}{\to}\vlambda$.

\begin{proof}
Consider the estimated densities
$f_n^E\left(\vx\vert\hat{\vlambda}_n\right)\in\mathcal{EF}_{\vt}$ and
$f_n^{NE}\left(\vx\vert\hat{\vtheta}_n, \vx^{1:n} \right)\in\mathcal{NEF}_{\vs}$.

MLE for regular exponential families converges in probability to the true values of
parameters, $\hat{\vlambda}_n\stackrel{p}{\to}\vlambda$. \citep[e.g.][Chapter 6]{Wainwright2008},\citep[e.g.][Chapter 5.2]{Vaart1998}

Since $f\left(\vx | \vlambda\right)$ is continuous w.r.t $\vlambda$, given any $\vx \in \sX$, $f^{E}_{n}\left(\vx|\hat{\vlambda}_n\right) \stackrel{p}{\to} f\left(\vx|\vlambda\right)$.
Because $\EF_{\vt}$ is an equicontinuous family, and $\sX$ is compact, according to \citep[][Problem 9.6]{Royden1988}, $f_n^{E} \stackrel{p}{\to}f$ uniformly on $\sX$.

Thus given any $\delta, \beta > 0$, there exists $N$ s.t. when $n>N$, $\forall \vx \in \sX$, $P(\left|f_n^E\left(\vx\vert\hat{\vlambda}_n\right)-f\left(\vx\vert\vlambda \right)\right|>\frac{\beta}{2})<\delta$. Then, we have
\begin{align}
P\left(\left|E_{f_n^E\left(\vx\vert\hat{\vlambda}_n\right)}\left[t_a^i\left(\vx\right)\right] - E_{f}\left[t_a^i\left(\vx\right)\right]\right|>\frac{\beta}{2}\right)
&= P\left(\int_{\vx\in\sX}\left|f_n^E\left(\vx\vert\hat{\vlambda}_n\right)-f\left(\vx\vert\vlambda \right)\right|t_a^i\left(\vx\right) \diff \vx > \frac{\beta}{2}\right)\nonumber\\
&\leq P\left(\int_{\vx\in\sX}\sup_{\vx\in\sX}\left|f_n^E\left(\vx\vert\hat{\vlambda}_n\right)-f\left(\vx\vert\vlambda \right)\right|t_a^i\left(\vx\right) \diff \vx > \frac{\beta}{2}\right)\nonumber\\
&= P\left(\sup_{\vx\in\sX}\left|f_n^E\left(\vx\vert\hat{\vlambda}_n\right)-f\left(\vx\vert\vlambda \right)\right|>\frac{\beta}{2}\right)\nonumber\\
&< \delta.\nonumber
\end{align}

Thus for all $\vx^i \in \sX$, given any $\delta_1, \frac{\beta}{2}>0$, there exist
$N_1$, such that when $n>N_1$,
\begin{equation*}
P\left(\left|E_{f_n^E\left(\vx\vert\hat{\vlambda}_n\right)}\left[t_a^i\left(\vx\right)\right]-E_{f}\left[t_a^i\left(\vx\right)\right]\right|>\frac{\beta}{2}\right)<\delta_1.
\end{equation*}

Meanwhile, since we have
$E_{\hat{f}_n\left(\vx\vert\vx^{1:n}\right)}\left[t_a^i\left(\vx\right)\right]$
for the box constraints in \eqref{eqn:nexp}, we would like to have 
\begin{equation*}
E_{\hat{f}_n\left(\vx\vert\vx^{1:n}\right)}\left[t_a^i\left(\vx\right)\right]
{\to} E_{f}\left[t_a^i\left(\vx\right)\right]
\end{equation*}
for any $\vx^i \in \sX$ uniformly as well. This is satisfied under assumptions of Theorem
\ref{thm:ekde}. Then we would have
\begin{equation*}
E_{\hat{f}_n\left(\vx\vert\vx^{1:n}\right)}\left[t_a^i\left(\vx\right)\right]\stackrel{a.s.}{\to}E_{f}\left[t_a^i\left(\vx\right)\right]
\end{equation*}
for all $i=1 \dots n$. Because almost sure convergence implies
convergence in probability, we have: for all $\vx^i \in \sX$, given
any $\delta_2, \frac{\beta}{2}>0$, there exists $N_2$, s.t. when
$n>N_2$, 
\begin{equation*}
P\left(\left|E_{\hat{f}_n\left(\vx\vert\vx^{1:n}\right)}\left[t_a^i\left(\vx\right)\right]-E_{f}\left[t_a^i\left(\vx\right)\right]\right|>\frac{\beta}{2}\right)<\delta_2.
\end{equation*}

Thus using Triangle Inequality and Boole Inequality, when $n>N=\max(N_1,N_2)$,
\begin{equation}
P\left( \left|E_{\hat{f}_n\left(\vx\vert\vx^{1:n}\right)}\left[t_a^i\left(\vx\right)\right] - E_{f_n^E\left(\vx\vert\hat{\vlambda}_n\right)}\left[t_a^i\left(\vx\right)\right]\right|> \beta\right) \leq \delta_1 + \delta_2.\nonumber
\end{equation}

This means that the additional constraints will be matched by $[\hat{\vlambda},\vec{0}]$ in probability. Because $\hat{\vlambda}_n$ is the solution to \eqref{eqn:moments} without the additional
constraints, thus it will have smaller KL divergence (larger entropy)
than $\hat{\vtheta}_n$. Therefore $[\hat{\vlambda}_n,\vec{0}]$ is
bound to be the MLE solution to \eqref{eqn:nexp}. Since
$\hat{\vlambda}_n \stackrel{p}{\to} \vlambda$, we have
$\tilde{\lambda}_{a,n}^i\stackrel{p}{\to}0, \forall i=1,\dots,n$
and $\tilde{\vlambda}_n\stackrel{p}{\to}\vlambda$.
\end{proof}

\begin{remark}
It can be noted that a sufficient condition for $\EF_{\vt}$ to be an equicontinuous exponential
family is that $\sX$ and $\sC$ are both compact. Since in reality, we
rarely deal with probability density functions with infinite density
values, the conditions for Theorem \ref{thm:true}, though look
restrictive, do not constrain the application domain much. However,
since for any regular exponential family the canonical parameter space is open
\citep[][Theorem 3.6]{Brown1986}, this means Theorem \ref{thm:true} for non-parametric
exponential family works only on a closed subset $\sC$, of the original
canonical parameter space.
\end{remark}

\subsection{Theorem \ref{thm:nef}}\label{sec:proofs:thm2}

To prove the pointwise convergence result for our non-parametric
exponential family, we rely on the convergence of \KDES and Triangle
Inequalities. We start off by introducing several lemmas.

\begin{lemma}\label{lemma:enekde}
Let  $\tilde{f}_n^{KDE}\left(\vx\right) =
E_{f_n^{NE}\left(\vy^{1:n}\vert\hat{\vtheta}_n,\vx^{1:n}\right)}\left[f_n^{KDE}\left(\vx\vert\vy^{1:n}\right)\right]$
be the expected value of the \KDE density given sample points
$\vy^{1:n}\iid f_n^{NE}\left(\vx\vert\hat{\vtheta}_n,\vx^{1:n}\right)$. Then $\tilde{f}_n^{KDE}\left(\vx^i\right) = E_{f_n^{NE}\left(\vx\vert\hat{\vtheta}_n,\vx^{1:n}\right)}\left[t_a^i\left(\vx\right)\right]$.
\end{lemma}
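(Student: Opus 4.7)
The plan is to mimic the calculation in Remark \ref{remark:ekde} verbatim, with $f_n^{NE}\left(\vx\vert\hat{\vtheta}_n,\vx^{1:n}\right)$ playing the role of the sampling density $f$. The only substantive ingredients are the definition of $\tilde{f}_n^{KDE}$, the definition of the \KDE density, the i.i.d.\ assumption on $\vy^{1:n}$, and the identity $t_a^i(\vy)=K_\vH(\vx^i;\vy)$.

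First I would expand the outer expectation using the definition of the \KDE density,
\begin{equation*}
\tilde{f}_n^{KDE}\left(\vx^i\right)
=\int\prod_{j=1}^n f_n^{NE}\!\left(\vy^j\vert\hat{\vtheta}_n,\vx^{1:n}\right)\,
\frac{1}{n}\sum_{j=1}^{n} K_{\vH}\!\left(\vx^i;\vy^j\right)\,\diff\vy^1\cdots\diff\vy^n.
\end{equation*}
Next I would swap the finite sum with the integral (linearity of expectation) and integrate out every coordinate except the $j$-th one, which is legitimate because each marginal is a density that integrates to $1$. This yields
\begin{equation*}
\tilde{f}_n^{KDE}\left(\vx^i\right)
=\frac{1}{n}\sum_{j=1}^{n}\int_{\sX} K_{\vH}\!\left(\vx^i;\vy\right)\, f_n^{NE}\!\left(\vy\vert\hat{\vtheta}_n,\vx^{1:n}\right)\diff\vy.
\end{equation*}
Finally I would recognize each integrand as $t_a^i(\vy)\cdot f_n^{NE}(\vy\vert\hat{\vtheta}_n,\vx^{1:n})$, so that every term in the average equals $E_{f_n^{NE}}[t_a^i(\vx)]$, collapsing the sum to the single quantity $E_{f_n^{NE}(\vx\vert\hat{\vtheta}_n,\vx^{1:n})}[t_a^i(\vx)]$, as claimed.

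There is no real obstacle; the lemma is an exact parallel of Remark \ref{remark:ekde} and its proof amounts to linearity of expectation under the i.i.d.\ assumption on $\vy^{1:n}$. The only point to be careful about is keeping the conditioning on $\vx^{1:n}$ (which only affects the parameters $\hat{\vtheta}_n$) distinct from the integration variables $\vy^{1:n}$; Fubini applies without issue because $K_\vH$ is bounded and each $f_n^{NE}(\cdot\vert\hat{\vtheta}_n,\vx^{1:n})$ is a probability density.
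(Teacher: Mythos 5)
Your proposal is correct and follows essentially the same route as the paper's proof, which likewise just repeats the computation of Remark \ref{remark:ekde} with $f_n^{NE}$ in place of $f$: expand the expectation over the product density, exchange the sum and integral, marginalize out all but one coordinate, and identify each term with $E_{f_n^{NE}}\left[t_a^i\left(\vx\right)\right]$. Your added remarks about Fubini and keeping the conditioning on $\vx^{1:n}$ separate from the integration variables are sensible but not needed beyond what the paper states.
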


\begin{proof}
Similar as Remark \ref{remark:ekde},

\begin{align}
\tilde{f}_n^{KDE}\left(\vx^i\right)
&=\int\limits_{\vx^1\dots\vx^n} \prod\limits_{j=1}^{n}f_n^{NE}\left(\vx^j\right)\frac{1}{n}
\sum\limits_{j=1}^{n} K_{\vH}\left(\vx^i ; \vx^j\right) \diff {\vx^1\dots\vx^n}\nonumber\\
&= \frac{1}{n}\sum\limits_{j=1}^{n}\int\limits_{\vx^j\in\sX}K_{\vH}\left(\vx^i;
\vx^j\right) f_n^{NE}\left(\vx^j\right)\diff {\vx^j} =\int\limits_{\vx\in\sX}K_{\vH}\left(\vx^i ; \vx\right) f_n^{NE}\left(\vx\vert\right)\diff
   {\vx}\nonumber\\
&=E_{f_n^{NE}\left(\vx\right)}\left[t_a^i\left(\vx\right)\right].\nonumber
\end{align}
\end{proof}

A pointwise convergence result for the expected \KDE density can be found in \cite{Parzen1962}:

\begin{theorem}\label{thm:weakekde}
Suppose $f\left(\vx\right)$ is any probability density function which is continuous at point $\vx^0$.
Let  $\overline{f}_n^{KDE}\left(\vx\right) =
E_{f\left(\vy^{1:n}\right)}\left[f_n^{KDE}\left(\vx\vert\vy^{1:n}\right)\right]$
be the expected value of the \KDE density given sample points
$\vy^{1:n}\iid f$.

Suppose $K_{\vH}\left(\vx\right): \vx \in \sX \subset \RR^m \to \RR$ is a
probability density function which satisfies:

\begin{align}
\sup_{\vx \in \sX} K_{\vH}\left(\vx\right) &< \infty,\label{eqn:KK1}\\
\lim\limits_{\normof{\vx}\to\infty}
K_{\vH}\left(\vx\right)\prod\limits_{i=1}^{m}\vx_i &= 0,\label{eqn:KK2}\\
\lim\limits_{n\to\infty}\left|\vH\right|^{\frac{1}{2}} &= 0. \label{eqn:KK3}
\end{align}

Then $\lim\limits_{n\to\infty}\overline{f}_n^{KDE}\left(\vx^0\right)=f\left(\vx^0\right)$.

That is, the expected \KDE density at continuity point $\vx^0$
converges to the sampling probability density function $f$ at $\vx^0$.
\end{theorem}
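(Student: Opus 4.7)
The plan is to follow the classical Bochner--Parzen argument: rewrite $\overline{f}_n^{KDE}(\vx^0)$ as the convolution $K_{\vH}*f$ evaluated at $\vx^0$ and show that, because the bandwidth $|\vH|^{1/2}$ shrinks to zero through condition~(3), this convolution acts as an approximate identity at every continuity point of $f$. Note the only role of $n$ in the claim is through $\vH=\vH_n$, so the task is really to prove $(K_{\vH}*f)(\vx^0)\to f(\vx^0)$ as $|\vH|^{1/2}\to 0$.

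First I would reduce to a convolution: by linearity of expectation and the i.i.d.\ assumption (the same calculation as in Remark~\ref{remark:ekde}),
\begin{equation*}
\overline{f}_n^{KDE}(\vx^0) \;=\; E_f\!\left[K_{\vH}(\vx^0;\vy)\right] \;=\; \int_{\sX} K_{\vH}(\vx^0-\vy)\,f(\vy)\,d\vy.
\end{equation*}
Substituting $\vu=\vH^{-1/2}(\vx^0-\vy)$ and using that $K$ is a probability density so $\int K(\vu)\,d\vu=1$, this rewrites as $\int K(\vu)\,f(\vx^0-\vH^{1/2}\vu)\,d\vu$, giving the key identity
\begin{equation*}
\overline{f}_n^{KDE}(\vx^0)-f(\vx^0)\;=\;\int K(\vu)\,\bigl[f(\vx^0-\vH^{1/2}\vu)-f(\vx^0)\bigr]\,d\vu.
\end{equation*}

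Next I would fix $\epsilon>0$ and split this integral at $\|\vH^{1/2}\vu\|=\delta$, where continuity of $f$ at $\vx^0$ is used to pick $\delta$ so small that $|f(\vx^0-\vw)-f(\vx^0)|<\epsilon$ whenever $\|\vw\|<\delta$. The near region then contributes at most $\epsilon\int K(\vu)\,d\vu=\epsilon$. On the far region, bound the integrand by $K(\vu)\bigl[f(\vx^0-\vH^{1/2}\vu)+f(\vx^0)\bigr]$; the $f(\vx^0)$ piece gives $f(\vx^0)\int_{\|\vu\|\ge\delta/h}K(\vu)\,d\vu$, which tends to $0$ as $|\vH|^{1/2}\to 0$ because $K$ is integrable.

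The main obstacle is the remaining tail $\int_{\|\vz\|\ge\delta}K_{\vH}(\vz)\,f(\vx^0-\vz)\,d\vz$ (after changing variables back to $\vz=\vH^{1/2}\vu$): here one cannot simply pull $\sup K_{\vH}$ out because $\sup K_{\vH}=|\vH|^{-1/2}\sup K$ blows up. This is exactly where Bochner's classical hypotheses~(1) and~(2) do their work: condition~(1) gives a global bound on the kernel, while condition~(2), $K(\vx)\prod_{i=1}^m x_i\to 0$ as $\|\vx\|\to\infty$, forces $K_{\vH}(\vz)$ to decay fast enough at fixed $\vz\neq 0$ to overcome the $|\vH|^{-1/2}$ blow-up and to concentrate mass near the origin as $|\vH|^{1/2}\to 0$, even when integrated against an arbitrary $L^1$ density $f$. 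Invoking this (Bochner's approximate-identity lemma, which is the technical heart of Parzen's 1962 paper) shows the tail tends to $0$. Combining the near and far estimates with condition~(3) yields $|\overline{f}_n^{KDE}(\vx^0)-f(\vx^0)|<2\epsilon$ for all sufficiently large $n$, which establishes the claimed pointwise limit.
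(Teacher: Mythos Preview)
The paper does not actually prove this theorem; it is quoted verbatim as a result from \cite{Parzen1962} and used as a black box in the proofs of Corollary~\ref{coro:convnekde} and Lemma~\ref{lemma:convatsinglepoint}. Your sketch is a faithful outline of Parzen's original Bochner-type argument (convolution form, change of variables to the base kernel, near/far split around a continuity point, and Bochner's approximate-identity lemma for the tail via hypotheses \eqref{eqn:KK1}--\eqref{eqn:KK2}), so in effect you have supplied precisely the proof the paper defers to the literature.
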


\begin{corollary}\label{coro:convnekde}
Given a sample $\vx^i$, $\lim\limits_{n\to\infty}E_{f_n^{NE}\left(\vx\right)}\left[t_a^i\left(\vx\right)\right]=f_n^{NE}\left(\vx^i\right)$ if $K_{\vH}\left(\vx\right)$ is continuous and \eqref{eqn:KK1},\eqref{eqn:KK2},\eqref{eqn:KK3} are satisfied.
\end{corollary}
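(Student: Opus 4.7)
The plan is to derive Corollary~\ref{coro:convnekde} as an essentially one-step consequence of Lemma~\ref{lemma:enekde} and Theorem~\ref{thm:weakekde}. First, Lemma~\ref{lemma:enekde} rewrites the left-hand side as
\[
\tilde{f}_n^{KDE}(\vx^i) \;=\; \int_{\sX} K_{\vH}(\vx^i;\vy)\,f_n^{NE}(\vy)\,d\vy,
\]
i.e., the convolution of the kernel against the non-parametric density evaluated at $\vx^i$. Because $K$ is a probability density, a change of variables gives $\int_{\sX} K_{\vH}(\vx^i;\vy)\,d\vy = 1$, so the error we have to drive to zero is exactly
\[
\int_{\sX} K_{\vH}(\vx^i;\vy)\,\bigl[f_n^{NE}(\vy) - f_n^{NE}(\vx^i)\bigr]\,d\vy.
\]

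Next, I would verify that $f_n^{NE}$ is continuous at $\vx^i$: from the exponential-family form \eqref{eqn:nexp form}, the continuity of $\vt$ (an assumption carried in from Theorem~\ref{thm:nef}), the continuity of each augmented feature $t_a^j = K_{\vH}(\cdot;\vx^j)$ (hypothesized in the corollary), and the continuity of the base measure $q$ together imply that $f_n^{NE}$ is continuous on all of $\sX$. With this in hand, I apply Theorem~\ref{thm:weakekde} with the sampling density $f$ taken to be $f_n^{NE}$ and the continuity point $\vx^0$ taken to be $\vx^i$; its three kernel hypotheses \eqref{eqn:KK1}--\eqref{eqn:KK3} are precisely those assumed in the corollary. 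The conclusion $\tilde{f}_n^{KDE}(\vx^i) \to f_n^{NE}(\vx^i)$ then follows, and via Lemma~\ref{lemma:enekde} this is the statement to be proved.

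The hard part will be handling the fact that $f_n^{NE}$ itself varies with $n$, whereas Theorem~\ref{thm:weakekde} is stated for a fixed sampling density. To make the application rigorous I would either (i) re-examine the Parzen argument and show that it goes through uniformly over an equicontinuous family containing $\{f_n^{NE}\}_n$, or (ii) bound the displayed error integral directly by splitting at a ball $\{\vy:\|\vy - \vx^i\| \le \delta\}$: on the ball, equicontinuity of the family $\{f_n^{NE}\}$ at $\vx^i$ controls $|f_n^{NE}(\vy) - f_n^{NE}(\vx^i)|$, while on the complement the kernel mass vanishes as a consequence of \eqref{eqn:KK2} and \eqref{eqn:KK3} (the tail-decay and bandwidth-shrinkage conditions). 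Establishing equicontinuity of $\{f_n^{NE}\}_n$ at $\vx^i$ is where the real work lies: it would require controlling the oscillation of the augmented contribution $\langle\vlambda_{a,n},\vt_a(\vy)\rangle$ near $\vx^i$ as both $n\to\infty$ and the bandwidth shrinks, for which I would use boundedness of $K_{\vH}$ (condition \eqref{eqn:KK1}) together with uniform boundedness of the fitted parameters that can be read off from the KKT conditions for \eqref{eqn:nexp}.
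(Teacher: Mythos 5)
Your proposal follows exactly the paper's route: Lemma~\ref{lemma:enekde} identifies $E_{f_n^{NE}}\left[t_a^i\left(\vx\right)\right]$ as the kernel-smoothed value of $f_n^{NE}$ at $\vx^i$, continuity of $f_n^{NE}$ is noted, and Theorem~\ref{thm:weakekde} is then invoked with $f_n^{NE}$ playing the role of the sampling density $f$. The subtlety you flag at the end --- that $f_n^{NE}$ itself changes with $n$ while Theorem~\ref{thm:weakekde} is stated for a single fixed density, so the convergence must be made uniform over the family $\left\{f_n^{NE}\right\}_n$ (e.g.\ via equicontinuity) --- is genuine and is not addressed by the paper's own two-line proof, so your proposed fix goes beyond what the paper actually establishes.
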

\begin{proof}
If $K_{\vH}\left(\vx\right)$ is continuous, then $f_n^{NE}$ satisfies Theorem \ref{thm:weakekde}'s conditions for $f$. Combining Lemma \ref{lemma:enekde}, we have then have Corollary \ref{coro:convnekde}.
\end{proof}

In addition, we have the mean-square convergence of the \KDE density
stated in \cite{Parzen1962} as well:

\begin{theorem}\label{thm:realkde}

Suppose $K_{\vH}\left(\vx\right): \vx \in \sX \subset \RR^m \to \RR$ is a
probability density function which in addition to
\eqref{eqn:KK1},\eqref{eqn:KK2},\eqref{eqn:KK3}, also satisfies:

\begin{equation} \label{eqn:KK4}
\lim\limits_{n\to\infty}n\left|\vH\right|^{\frac{1}{2}} = \infty
\end{equation}

and $f\left(\vx\right)$ is probability density function which is continuous at point $\vx^0$.

Then
$\lim\limits_{n\to\infty}E_{f\left(\vx^{1:n}\right)}\left[\left(f_n^{KDE}\left(\vx^0\vert
  \vx^{1:n}\right)-f\left(\vx^0\right)\right)^2\right]=0$.

That is, the \KDE density at continuity point $\vx^0$
converges in mean square to the true density $f$ at $\vx^0$.
\end{theorem}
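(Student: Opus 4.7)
The plan is to establish mean-square convergence by the standard bias-variance decomposition
\begin{equation*}
E\left[\left(f_n^{KDE}\left(\vx^0\vert\vx^{1:n}\right)-f\left(\vx^0\right)\right)^2\right] = \left(\overline{f}_n^{KDE}\left(\vx^0\right)-f\left(\vx^0\right)\right)^2 + \var\left(f_n^{KDE}\left(\vx^0\vert\vx^{1:n}\right)\right),
\end{equation*}
and then drive each piece to $0$ separately. The squared-bias term is essentially free: Theorem \ref{thm:weakekde} already gives $\overline{f}_n^{KDE}\left(\vx^0\right)\to f\left(\vx^0\right)$ under conditions \eqref{eqn:KK1}--\eqref{eqn:KK3} (which are all assumed here), so squaring preserves the limit and that half vanishes without additional work.

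The real content lies in the variance term, where condition \eqref{eqn:KK4} is needed. Since $\vx^1,\dots,\vx^n$ are i.i.d., the variance of the average reduces to
\begin{equation*}
\var\left(f_n^{KDE}\left(\vx^0\vert\vx^{1:n}\right)\right) = \tfrac{1}{n}\var\left(K_{\vH}\left(\vx^0;\vx^1\right)\right) \leq \tfrac{1}{n}\, E\left[K_{\vH}\left(\vx^0;\vx^1\right)^2\right].
\end{equation*}
I would then use the definition $K_{\vH}\left(\vx\right) = \left|\vH\right|^{-1/2}K\left(\vH^{-1/2}\vx\right)$, together with the change of variables $\vu = \vH^{-1/2}(\vx^0-\vx)$ whose Jacobian is $\left|\vH\right|^{1/2}$, to write
\begin{equation*}
E\left[K_{\vH}\left(\vx^0;\vx^1\right)^2\right] = \left|\vH\right|^{-1/2}\int K\left(\vu\right)^2 f\left(\vx^0 - \vH^{1/2}\vu\right)\diff\vu.
\end{equation*}
Using \eqref{eqn:KK1} to dominate $K\left(\vu\right)^2 \leq \left(\sup K\right)K\left(\vu\right)$ reduces the remaining integral to $\int K\left(\vu\right)f\left(\vx^0 - \vH^{1/2}\vu\right)\diff\vu$, which by continuity of $f$ at $\vx^0$ and condition \eqref{eqn:KK3} tends to $f\left(\vx^0\right)\int K = f\left(\vx^0\right)$ (this is precisely the standard limit that underlies Theorem \ref{thm:weakekde}; alternatively one can cite it in a rewritten form).

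Putting these together yields $\var(f_n^{KDE}(\vx^0\vert\vx^{1:n})) = O\!\left(1/(n\left|\vH\right|^{1/2})\right)$, which by assumption \eqref{eqn:KK4} tends to $0$. Combining with the vanishing squared bias gives the claim. The main obstacle is the technical lemma on the quadratic moment of the kernel: one must ensure the pointwise convergence $\int K(\vu)^2 f(\vx^0-\vH^{1/2}\vu)\diff\vu \to f(\vx^0)\int K^2$ under only continuity of $f$ at $\vx^0$ (not uniform continuity), which requires invoking dominated convergence together with \eqref{eqn:KK1} and \eqref{eqn:KK2} to control the tail of the integrand; everything else is bookkeeping.
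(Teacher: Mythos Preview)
Your argument is correct and is precisely Parzen's classical bias--variance proof; note, however, that the paper does not actually prove this theorem but merely quotes it from \citet{Parzen1962}, so there is no in-paper proof to compare against. Your write-up matches the cited source's approach essentially step for step.
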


\begin{lemma}\label{lemma:convatsinglepoint}
\begin{equation*}
f_n^{NE}\left(\vx^i\right) \stackrel{p}{\to} f\left(\vx^i\right)
\end{equation*}
if
\begin{enumerate}
\item $K_{\vH}\left(\vx\right)$ is continuous,
\item and
  \eqref{eqn:KK1},\eqref{eqn:KK2},\eqref{eqn:KK3},\eqref{eqn:KK4} are
  satisfied,
\item and $\lim\limits_{n\to\infty} \beta_i = 0, \forall i=1\dots n$.
\end{enumerate}
\end{lemma}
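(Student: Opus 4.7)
The plan is to establish $f_n^{NE}(\vx^i) \stackrel{p}{\to} f(\vx^i)$ by the three-way triangle decomposition
\[
\bigl|f_n^{NE}(\vx^i) - f(\vx^i)\bigr| \;\leq\; A_n + B_n + C_n,
\]
where $A_n = \bigl|f_n^{NE}(\vx^i) - E_{f_n^{NE}}[t_a^i(\vx)]\bigr|$, $B_n = \bigl|E_{f_n^{NE}}[t_a^i(\vx)] - E_{\hat{f}_n}[t_a^i(\vx)]\bigr|$, and $C_n = \bigl|E_{\hat{f}_n}[t_a^i(\vx)] - f(\vx^i)\bigr|$. Each of these three intermediate quantities targets a result already available: Corollary \ref{coro:convnekde} controls $A_n$, the box constraint in \eqref{eqn:nexp} together with the hypothesis $\beta_i\to 0$ controls $B_n$, and the chain of Lemma \ref{lemma:kde} plus Theorem \ref{thm:realkde} controls $C_n$. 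A union bound at the end promotes the term-by-term convergence to the claimed probabilistic statement.

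First I would verify that $f_n^{NE}$ is continuous on $\sX$ (since $\vt$ and $t_a^i=K_{\vH}(\vx^i;\cdot)$ are continuous by hypothesis and $Z$ is finite by feasibility), so that Corollary \ref{coro:convnekde} applies at the sample point $\vx^i$ and yields $A_n\to 0$ pathwise under \eqref{eqn:KK1}--\eqref{eqn:KK3}. Next, the feasibility of $f_n^{NE}$ in \eqref{eqn:nexp} gives $B_n\leq \beta_i$, which vanishes by the third hypothesis. Finally, Lemma \ref{lemma:kde} rewrites $C_n$ as $\bigl|f_n^{KDE}(\vx^i\vert \vx^{1:n}) - f(\vx^i)\bigr|$, reducing everything to a standard \KDE consistency question for which Theorem \ref{thm:realkde} is designed.

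The main obstacle will be the careful handling of $C_n$, because Theorem \ref{thm:realkde} is stated at a \emph{fixed} continuity point $\vx^0$, whereas here the evaluation point $\vx^i$ is itself random and, moreover, is one of the samples used to build the \KDE. I would condition on $\vx^i=\vx^0$, so that $\{\vx^j : j\neq i\}$ is an i.i.d.\ sample of size $n-1$ from $f$ independent of $\vx^0$, and split
\[
f_n^{KDE}(\vx^0\vert \vx^{1:n}) \;=\; \tfrac{1}{n}K_{\vH}(\vx^0;\vx^0) \;+\; \tfrac{n-1}{n}\cdot\tfrac{1}{n-1}\sum_{j\neq i} K_{\vH}(\vx^0;\vx^j).
\]
The diagonal term equals $\tfrac{1}{n}|\vH|^{-1/2}K(\mathbf{0})$ and vanishes under \eqref{eqn:KK4}, while the off-diagonal average is an ordinary $(n{-}1)$-sample \KDE evaluated at the fixed point $\vx^0$; Theorem \ref{thm:realkde} delivers its mean-square, hence in-probability, convergence to $f(\vx^0)$ at every continuity point of $f$. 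Integrating the resulting conditional statement against the marginal $f$ of $\vx^i$ (which under the umbrella of Theorem \ref{thm:nef} is uniformly continuous, so every point is a continuity point) upgrades $C_n\stackrel{p}{\to}0$ to hold unconditionally. Combining the three pieces via $P(|f_n^{NE}(\vx^i)-f(\vx^i)|>\varepsilon) \leq P(A_n>\varepsilon/3) + P(B_n>\varepsilon/3) + P(C_n>\varepsilon/3)$ completes the argument.
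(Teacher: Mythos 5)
Your decomposition into $A_n$, $B_n$, $C_n$ is exactly the paper's own triangle-inequality split (the two pivots $E_{f_n^{NE}}[t_a^i(\vx)]=\tilde{f}_n^{KDE}(\vx^i)$ and $E_{\hat{f}_n}[t_a^i(\vx)]=f_n^{KDE}(\vx^i\vert\vx^{1:n})$ are the paper's via Lemma \ref{lemma:enekde} and Lemma \ref{lemma:kde}), and each term is dispatched with the same ingredients: Corollary \ref{coro:convnekde} for $A_n$, the box constraint with $\beta_i\to 0$ for $B_n$, and Theorem \ref{thm:realkde} for $C_n$, finished by a union bound. Your extra care on $C_n$ --- conditioning on $\vx^i=\vx^0$, splitting off the diagonal term $\tfrac{1}{n}\lvert\vH\rvert^{-1/2}K(\mathbf{0})$, and noting it vanishes under \eqref{eqn:KK4} --- is a refinement the paper omits, but it does not change the route.
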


\begin{proof}
By the Triangle Inequality,
\begin{align}
\left|f_n^{NE}\left(\vx^i\right) - f\left(\vx^i\right)\right| &\leq \left|f_n^{NE}\left(\vx^i\right) - \tilde{f}_n^{KDE}\left(\vx^i\right)\right| + \left|f_n^{KDE}\left(\vx^i\right) - f\left(\vx^i\right)\right|\nonumber\\
&\quad+ \left|\tilde{f}_n^{KDE}\left(\vx^i\right) - f_n^{KDE}\left(\vx^i\vert\vx^{1:n}\right)\right|.\nonumber
\end{align}
Fix $\xi,\zeta>0$.  Find $N_1$ using Corollary
\ref{coro:convnekde},$N_2$ using Theorem \ref{thm:realkde}, $N_3$ by
the schedule of $\beta$ s.t.
\begin{align*}
P\left(\left|f_n^{NE}\left(\vx^i\right) - \tilde{f}_n^{KDE}\left(\vx^i\right)\right| > \xi/3\right) &< \zeta/3,\\
P\left(\left|f_n^{KDE}\left(\vx^i\right) - f\left(\vx^i\right)\right| > \xi/3\right) &< \zeta/3,\\
P\left(\left|\tilde{f}_n^{KDE}\left(\vx^i\right) - f_n^{KDE}\left(\vx^i\vert\vx^{1:n}\right)\right| > \xi/3\right) &< \zeta/3,
\end{align*}
for all $n\geq N_1$, $n\geq N_2$, and $n\geq N_3$, respectively. Set
$N=\max\left\{N_1,N_2,N_3\right\}$.  Then by Boole Inequality, when $n>N$,
\begin{align*}
P\left(\left|f_n^{NE}\left(\vx^i\right) - f\left(\vx^i\right)\right| > \xi\right) &\leq P\left(\left|f_n^{NE}\left(\vx^i\right) - \tilde{f}_n^{KDE}\left(\vx^i\right)\right| > \xi/3\right)\\
&\quad + P\left(\left|f_n^{KDE}\left(\vx^i \vert \vx^{1:n}\right) - f\left(\vx^i\right)\right| > \xi/3\right)\\
&\quad + P\left(\left|\tilde{f}_n^{KDE}\left(\vx^i\right) - f_n^{KDE}\left(\vx^i \vert \vx^{1:n}\right)\right| > \xi/3\right) < \zeta.
\end{align*}
So $\forall \xi>0$, $\lim_{n\to\infty}P\left(\left|f_n^{NE}\left(\vx^i\right) - f\left(\vx^i\right)\right| > \xi\right)=0$.
\end{proof}

\begin{lemma}\label{lemma:fillcube}
Given any uniformly continuous probability density function $f: \sX \to \RR$, and $n$ samples $\vx^1 \dots \vx^n \iid f$. $\forall \xi > 0$, if we draw a new sample $\vx \sim f$,
\[\lim\limits_{n\to\infty}P\left(\left|f\left(\vx\right)-f\left(\vx^{\argmin_{i=1..n}\normof{\vx-\vx^i}}\right)\right|>\xi\right)=0.\]
\end{lemma}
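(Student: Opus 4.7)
The plan is to reduce the claim to a nearest-neighbor distance bound and then apply bounded convergence. First, I would fix $\xi>0$ and invoke uniform continuity of $f$ on $\sX$ to produce $\delta>0$ such that $\normof{\vx-\vy}<\delta$ implies $\left|f(\vx)-f(\vy)\right|<\xi$. Writing $\vx^\ast=\vx^{\argmin_{i=1..n}\normof{\vx-\vx^i}}$, the event $\{\left|f(\vx)-f(\vx^\ast)\right|>\xi\}$ is then contained in the event $\{\normof{\vx-\vx^\ast}\geq\delta\}$, so it suffices to show that $P\bigl(\min_i\normof{\vx-\vx^i}\geq\delta\bigr)\to 0$.

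Next, I would condition on the value of the new sample $\vx$. Because $\vx^1,\dots,\vx^n$ are i.i.d.\ from $f$ and independent of $\vx$,
\[
P\!\left(\min_i\normof{\vx-\vx^i}\geq\delta\,\Big|\,\vx\right)=\bigl(1-p_\delta(\vx)\bigr)^{n},
\]
where $p_\delta(\vx)=\int_{\normof{\vy-\vx}<\delta}f(\vy)\,d\vy$ is the $f$-mass of the open $\delta$-ball around $\vx$. For any $\vx$ in the support of $f$, the definition of support forces $p_\delta(\vx)>0$, hence $(1-p_\delta(\vx))^n\to 0$. Since $\vx\sim f$ lies in the support with probability one, this pointwise convergence holds for $f$-almost every $\vx$.

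Finally, I would take the outer expectation and apply bounded convergence (the integrand is dominated by $1$ and tends to $0$ a.s.\ under $f$):
\[
P\!\left(\min_i\normof{\vx-\vx^i}\geq\delta\right)=\int_{\sX}\bigl(1-p_\delta(\vx)\bigr)^{n} f(\vx)\,d\vx\ \longrightarrow\ 0,
\]
which completes the argument. The only subtlety — and the step I would check most carefully — is the claim that $p_\delta(\vx)>0$ for $f$-almost every $\vx$; this is immediate from the definition of support, but it is the one place where we rely on something beyond uniform continuity. Uniform continuity itself is needed precisely so that a single $\delta$ can be chosen uniformly in $\vx$; pointwise continuity would force one to carry $\delta=\delta(\vx)$ through the whole argument and complicate the dominated-convergence step.
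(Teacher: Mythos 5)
Your proof is correct, and it takes a genuinely different route from the paper's after the (identical) first step of using uniform continuity to reduce the claim to showing $P\left(\min_i\normof{\vx-\vx^i}\geq\delta\right)\to 0$. The paper partitions $\sX$ into countably many hypercubes $\sQ_1,\sQ_2,\dots$ of side $\varepsilon$, writes the relevant probability as the series $\sum_k p_k(1-p_k)^n$ with $p_k$ the $f$-mass of $\sQ_k$, and passes to the limit term by term via monotone convergence. You instead condition on the new sample $\vx$, observe that the conditional probability is exactly $\bigl(1-p_\delta(\vx)\bigr)^n$ with $p_\delta(\vx)$ the mass of the $\delta$-ball around $\vx$, note that $p_\delta(\vx)>0$ for $f$-almost every $\vx$ (since $\vx$ lands in the support of $f$ almost surely), and finish with bounded convergence. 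Your version buys two things: the ``almost every $\vx$ is in the support'' fact is cleaner than managing a countable partition, and centering the ball at $\vx$ sidesteps a geometric wrinkle in the cube argument --- a cube of side $\varepsilon$ in $\RR^m$ has diameter $\varepsilon\sqrt{m}$, so sharing a cube with some $\vx^i$ does not by itself force $\normof{\vx-\vx^i}<\varepsilon$, and the paper's partition would need side length $\varepsilon/\sqrt{m}$ to make the stated containment literal. The paper's version, in exchange, is more elementary in that it never invokes the notion of support or a dominated-convergence argument over an uncountable index. Both are valid; yours is the tighter writeup.
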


\begin{proof}
Since $f$ is uniformly continuous, given any $\xi > 0$, there exists $\varepsilon > 0$, s.t. $\forall \vx,\vy \in \sX$ and $\normof{\vx-\vy} <\varepsilon$, we have $\left| f\left(\vx\right) - f\left(\vy\right)\right|<\xi$. Therefore, let $A$ be the event $\left|f\left(\vx\right)-f\left(\vx^{\argmin_{i=1..n}\normof{\vx-\vx^i}}\right)\right|>\xi$, let $B$ be the event $\left\|\vx-\vx^{\argmin_{i=1..n}\normof{\vx-\vx^i}}\right\|>\varepsilon$ then $A\subset B$. Thus $P(A) < P(B)$, i.e. 
\begin{equation}
P\left(\left|f\left(\vx\right)-f\left(\vx^{\argmin_{i=1..n}\normof{\vx-\vx^i}}\right)\right|>\xi\right)
 < P\left(\left\|\vx-\vx^{\argmin_{i=1..n}\normof{\vx-\vx^i}}\right\|>\varepsilon\right).\nonumber
\end{equation}

If we divide $\sX$ into countable number of hypercubes with each side length being $\varepsilon$, and index the hypercubes as $\sQ_1 \dots \sQ_k \dots$. Let $p_k = \int_{\vx \in \sQ_k} f\left(\vx\right) \diff \vx$. Then the probability for event $B$ to happen is that $\vx$ is the first sample to drop in $\sQ_k, \forall k \in \ZZ$. That is, $P_n\left(B\right) \triangleq \sum\limits_{k\in\ZZ} p_k \left(1-p_k\right)^n$. Because $p_k\left(1-p_k\right)^n \to 0$, and is monotonically decreasing, using monotone convergence theorem, we have $\lim\limits_{n\to\infty} P_n\left(B\right) = 0$.

Therefore,

\begin{align}
\lim\limits_{n\to\infty}&P\left(\left|f\left(\vx\right)-f\left(\vx^{\argmin_{i=1..n}\normof{\vx-\vx^i}}\right)\right|>\xi\right)
< \lim\limits_{n\to\infty}P\left(\left\|\vx-\vx^{\argmin_{i=1..n}\normof{\vx-\vx^i}}\right\|>\varepsilon\right) \nonumber \\
& = 0.\nonumber
\end{align}
\end{proof}

Now we are ready to prove Theorem \ref{thm:nef} with Lemma \ref{lemma:convatsinglepoint} and Lemma \ref{lemma:fillcube}.

\noindent{\bf Theorem \ref{thm:nef}} Given a probability density function
$f\left(\vx\right):\sX\to\RR$, let $f_n^{NE}\left(\vx\vert
\hat{\theta}_n,\vx^{1:n}\right)\in\mathcal{NEF}_{\vs}$ be a solution satisfying
\eqref{eqn:nexp}. If
\begin{enumerate}
\setlength{\itemsep}{0em}
\setlength{\parskip}{0pt}
\setlength{\parsep}{0pt}
\item $f$ is uniformly continuous on $\sX$,
\item $K_{\vH}\left(\vx\right)$ is uniformly continuous on $\sX$,
\item and
  \eqref{eqn:KK1},\eqref{eqn:KK2},\eqref{eqn:KK3},\eqref{eqn:KK4}
  holds,
\item $\lim\limits_{n\to\infty} \beta_i = 0, \forall i=1\dots n$,
\end{enumerate}

then $f_n^{NE}\left(\vx \vert \hat{\vtheta}_n,\vx^{1:n}\right) \stackrel{p}{\to} f\left(\vx\right)$ pointwise on $\sX$.

\begin{proof}
By the Triangle Inequality, given any $\vx \in \sX$, and $f_n^{NE}\left(\vx\right)$ trained on $n$ samples $\vx^{1:n}\iid f$, let $i^\prime\triangleq \argmin_{i=1..n}\normof{\vx-\vx^i}$.
\begin{equation}
\left|f_n^{NE}\left(\vx\right) - f\left(\vx\right)\right|
\leq \left|f_n^{NE}\left(\vx^{i^\prime}\right) - f\left(\vx^{i^\prime}\right)\right| + \left|f_n^{NE}\left(\vx\right) - f_n^{NE}\left(\vx^{i^\prime}\right)\right| + \left|f\left(\vx\right) - f\left(\vx^{i^\prime}\right)\right|.\nonumber
\end{equation}
Fix $\xi,\zeta>0$, use Lemma \ref{lemma:convatsinglepoint} to find $N_1 \in \mathbb{N}$ and Lemma \ref{lemma:fillcube} to find $N_2,N_3 \in \mathbb{N}$ s.t.
\begin{align*}
P\left(\left|f_n^{NE}\left(\vx^{i^\prime}\right) - f\left(\vx^{i^\prime}\right)\right| > \xi/3\right) &< \zeta/3,\\
P\left(\left|f_n^{NE}\left(\vx\right) - f_n^{NE}\left(\vx^{i^\prime}\right)\right| > \xi/3\right) &< \zeta/3,\\
P\left(\left|f\left(\vx\right) - f\left(\vx^{i^\prime}\right)\right| > \xi/3\right) &< \zeta/3,
\end{align*}
for all $n\geq N_1$, $n\geq N_2$, and $n\geq N_3$, respectively. Set
$N=\max\left\{N_1,N_2,N_3\right\}$.  Then by Boole Inequality, when $n>N$,
\begin{align*}
P\left(\left|f_n^{NE}\left(\vx\right) - f\left(\vx\right)\right| > \xi\right) &\leq P\left(\left|f_n^{NE}\left(\vx^{i^\prime}\right) - f\left(\vx^{i^\prime}\right)\right| > \xi/3\right)\\
&\quad + P\left(\left|f_n^{NE}\left(\vx\right) - f_n^{NE}\left(\vx^{i^\prime}\right)\right| > \xi/3\right) + P\left(\left|f\left(\vx\right) - f\left(\vx^{i^\prime}\right)\right| > \xi/3\right)\\
&< \zeta.
\end{align*}
So $\forall \xi>0$, $\lim_{n\to\infty}P\left(\left|f_n^{NE}\left(\vx\right) - f\left(\vx\right)\right| > \xi\right)=0$.
\end{proof}


\end{document}